\newcommand{\ourmethod}{\text{UOWQ}}
\newcommand{\allsource}{\text{AllSources $\cup$ Target}}
\DeclareMathAlphabet{\mathbbb}{U}{bbold}{m}{n}  
\newtheorem{theorem}{Theorem}
\newtheorem{lemma}[theorem]{Lemma}
\newtheorem{proposition}[theorem]{Proposition}
\newtheorem{definition}[theorem]{Definition}
\newtheorem{remark}[theorem]{Remark}
\DeclareMathOperator*{\argmax}{arg\,max}
\DeclareMathOperator*{\argmin}{arg\,min}
\newcommand\tsup[2][2]{%
	\def\useanchorwidth{T}%
	\ifnum#1>1%
	\stackon[-1.3ex]{\tsup[\numexpr#1-1\relax]{#2}}{\mathchar"307E\kern-.5pt}%
	\else%
	\stackon[-1ex]{#2}{\mathchar"307E\kern-.5pt}%
	\fi%
}
\newcommand{\cT}{{\mathcal{T}}}
\newcommand{\X}{{\mathcal{X}}}
\newcommand{\cA}{\mathcal{A}}
\newcommand{\cL}{\mathcal{L}}
\newcommand{\cS}{\mathcal{S}}
\newcommand{\cX}{\X}
\newcommand{\defeq}{\triangleq}
\newcommand{\mycomment}[1]{\hspace{\fill} \textcolor{black}{// #1}}
\def\BibTeX{{\rm B\kern-.05em{\sc i\kern-.025em b}\kern-.08em
    T\kern-.1667em\lower.7ex\hbox{E}\kern-.125emX}}
\begin{document}

\title{
Unified Optimization of Source Weights and Transfer Quantities in Multi-Source Transfer Learning: An Asymptotic Framework
}

\author{
Qingyue Zhang\orcidlink{0000-0001-9197-1141}, Chang Chu\orcidlink{0009-0000-0958-0046}, Haohao Fu\orcidlink{0009-0005-9968-4098}, Tianren Peng\orcidlink{0000-0002-1779-1633}, Yanru Wu\orcidlink{0009-0008-3738-5019}, Guanbo Huang\orcidlink{0009-0001-1284-9787}, Yang Li\orcidlink{0000-0002-2053-6393}, and Shao-Lun Huang\orcidlink{0000-0003-2827-4022} %
\thanks{The authors are with the
Tsinghua Shenzhen International Graduate School, Tsinghua University, Shenzhen, China. Yang Li is now with the School of Artificial Intelligence,
The Chinese University of Hong Kong, Shenzhen, China. 
This work was done while she was with the Tsinghua Shenzhen International Graduate School,
Tsinghua University, Shenzhen, China.}%
\thanks{This work was supported in part by the National Key R\&D Program of China
under Grant 2021YFA0715202, the National Natural Science Foundation of China  
under Grants 62571296 and 62371270, and the Shenzhen Science and Technology Program under Grant KJZD20240903102700001.}%
\thanks{Corresponding authors: Yang Li (yangl@cuhk.edu.cn), Shao-Lun Huang (twn2gold@gmail.com).}%
\thanks{Our code is publicly available at:
\url{https://github.com/zqy0126/UOWQ}.}%

}

\maketitle

\begin{abstract}
In multi-source transfer learning, a key challenge lies in how to appropriately differentiate and utilize heterogeneous source tasks.
However, existing multi-source methods typically focus on optimizing either the source weights or the amount of transferred samples, largely neglecting their joint consideration.
In this work, we propose a theoretical framework, \textbf{U}nified \textbf{O}ptimization of \textbf{W}eights and \textbf{Q}uantities (UOWQ), that jointly determines the optimal source weights and transfer quantities for each source task. Specifically, the framework formulates multi-source transfer learning as a parameter estimation problem based on an asymptotic analysis of a Kullback--Leibler divergence--based generalization error measure, leading to two main theoretical findings: 
1) using all available source samples is always optimal when the weights are properly adjusted; 
2) the optimal source weights are characterized by a principled optimization problem whose structure explicitly incorporates the Fisher information, parameter discrepancy, parameter dimensionality, and transfer quantities.
Building on the theoretical results, we further propose a practical algorithm for multi-source transfer learning, and extend it to multi-task learning settings where each task simultaneously serves as both a source and a target.
Extensive experiments on real-world benchmarks, including DomainNet and Office-Home, demonstrate that \ourmethod{} consistently outperforms strong baselines. The results validate both the theoretical predictions and the practical effectiveness of our framework.

\end{abstract}

\begin{IEEEkeywords}
transfer learning, multi-source learning, asymptotic analysis, K-L divergence,
\end{IEEEkeywords}

\section{Introduction}
\label{section_Introduction}

Transfer learning has emerged as a critical paradigm in modern machine learning, especially in scenarios where labeled data is scarce for the target task~\cite{xiao2024selective}. By leveraging knowledge from related source tasks, transfer learning enables models to generalize more effectively and efficiently to the target task. In recent years, transfer learning has a wide range of applications. For example, in deep reinforcement learning, transfer learning plays a crucial role in accelerating policy learning and improving sample efficiency by transferring policies, representations, or experience \cite{tpami2023_zhu_Reinforcelearning_transfer}.
In healthcare, transfer learning leverages large medical datasets to support rare disease diagnosis with limited clinical data \cite{serte2022deep}. In natural language processing, pretrained models built on large-scale transfer learning foundations have revolutionized performance across a wide range of downstream tasks \cite{devlin2019bert}. 
While single-source transfer learning has been studied earlier, growing attention is now being paid to multi-source transfer learning.
With the rapid development of large language models (LLMs) trained on massive and diverse corpora, effectively leveraging complementary knowledge from multiple heterogeneous data sources has become increasingly important.


\begin{table*}[t]
\setlength{\tabcolsep}{4pt}
    \renewcommand{\arraystretch}{0.76}
    \centering
    \caption{
    \textbf{Comparison across matching-based transfer learning methods,}
based on whether they perform quantity optimization, weight optimization, 
are tailored to multi-source, have task generality, have shot generality, 
and require target labels. 
The `\Checkmark' represents obtaining the corresponding aspects, while `\XSolidBrush' represents the opposite. 
Quantity optimization denotes the ability to determine how many source samples or data subsets should be transferred. 
Weight optimization denotes the ability to assign or learn weights for different source tasks or samples. 
Task generality denotes the ability to handle various target task types, 
and shot generality denotes the ability to avoid negative transfer in different target sample quantity settings, 
including few-shot and non-few-shot.
    }
    \begin{tabular}{c c c c c c c}
        \toprule
         \textbf{Method} &  
         \textbf{Quantity Optimization} &
         \textbf{Weight
         Optimization} &
         \textbf{Multi-Source}& \textbf{Task Generality}  & \textbf{Shot Generality} & \textbf{Target Label} \\
         \midrule
         MCW \cite{lee2019learning_MCW} &  \XSolidBrush & \Checkmark &  \Checkmark & \XSolidBrush & \XSolidBrush & Supervised \\
         Leep \cite{nguyen2020leep} &  \XSolidBrush & \XSolidBrush &  \XSolidBrush &  \Checkmark & \Checkmark & Supervised \\
         Tong \cite{tong2021mathematical} &  \XSolidBrush & \Checkmark &  \Checkmark &  \XSolidBrush & \Checkmark &Supervised \\
         DATE \cite{han2023discriminability_DATE} &  \XSolidBrush & \Checkmark & \Checkmark   &  \Checkmark & \Checkmark & Unsupervised \\
         H-ensemble \cite{wu2024h_Hensemble}  &  \XSolidBrush & \Checkmark &  \Checkmark & \XSolidBrush & \XSolidBrush & Supervised \\ 
          FOMA \cite{li2024scalable} &\Checkmark&\XSolidBrush&   \Checkmark & \XSolidBrush & \XSolidBrush & Supervised \\ 
          DBF \cite{jain2023data}  &\Checkmark&\XSolidBrush&  \XSolidBrush & \XSolidBrush & \XSolidBrush & Supervised \\ 
          OTQMS \cite{zhang2025theoretical}  &\Checkmark&\XSolidBrush&  \Checkmark & \Checkmark & \Checkmark & Supervised \\ 
         \ourmethod{} (Ours)   & \Checkmark & \Checkmark &  \Checkmark &  \Checkmark &  \Checkmark & Supervised \\
         \bottomrule
    \end{tabular}
    \label{tab:comparison}
\end{table*}



In multi-source transfer learning, how to differentially leverage multiple source tasks has emerged as a central research problem. From a statistical perspective, while incorporating additional data from source tasks can reduce estimation variance, sources that are overly dissimilar to the target task may introduce excessive systematic bias, outweighing the benefits of variance reduction. As a result, naive uniform transfer from multiple source tasks can lead to negative transfer~\cite{wang2019characterizing}, making the identification and effective utilization of truly related source tasks a key challenge.
Existing approaches typically address this challenge from one of two separate perspectives. On one hand, \textbf{task- or source-weighting} methods aim to control the influence of different sources by assigning importance weights to source models or samples~\cite{shui2021aggregating_WADN,wu2024h_Hensemble}, thereby mitigating bias caused by mismatched domains. 
In existing studies on task weighting, most frameworks implicitly assume that all source samples are utilized, while the transfer weights are determined independently of the transfer quantity.
On the other hand, \textbf{sample selection} methods focus on determining the optimal transfer quantity for each source and identifying the source samples to be transferred~\cite{jain2023data,li2024scalable, zhang2025theoretical}.
Considering source weighting and transfer quantities in isolation restricts the solution space and can lead to suboptimal solutions, whereas jointly optimizing them leads to more principled and effective solutions.
The goal of this work is to establish a unified theoretical framework that not only jointly determines the optimal source weights and transfer quantities, but also provides a principled interpretation of their interaction, thereby offering insights into existing weighting and sample selection strategies.
In Table \ref{tab:comparison}, we provide a detailed comparison of the proposed method with several existing methods.    


In this work, we develop a theoretical framework termed \textbf{U}nified \textbf{O}ptimization of \textbf{W}eights and \textbf{Q}uantities (UOWQ), to determine the optimal weights and quantities for source tasks in transfer learning via asymptotic analysis.
Through an asymptotic analysis of the measure based on Kullback--Leibler (K-L) divergence, we formulate the multi-source transfer learning problem as a parameter estimation problem, which in turn yields an optimization problem of the source weights and transfer quantities. 
First, we prove the conclusion that using all available source samples is always optimal once the weights are properly adjusted, and provide an intuitive interpretation of this result.
Moreover, our analysis provides a closed-form solution for the optimal transfer weight in the single-source setting and characterizes the optimal weights in the multi-source setting via a convex optimization formulation, jointly accounting for the Fisher information, parameter discrepancy, parameter dimensionality, and transfer quantities.
On this basis, we introduce practical algorithms capable of supporting both multi-source transfer learning and multi-task transfer learning.  
Finally, the effectiveness of the proposed algorithms is validated through experiments on real-world datasets. 

The main contributions of this work are summarized as follows:  

\begin{itemize}
    \item \textbf{Theoretical framework.} 
    We introduce \ourmethod{}, a mathematical framework based on asymptotic analysis, to jointly optimize transfer quantities and source weights in multi-source transfer learning. Under joint optimization, we prove that using all available source samples is always optimal. 
    In addition, we characterize the optimal transfer weights for each source via an optimization formulation that explicitly accounts for the transfer quantities.
    \item \textbf{Practical algorithm.} Based on our theory, we develop practical algorithms derived from the framework for both multi-source transfer learning and multi-task learning. 
    Specifically, to cope with scarce target data and ensure robustness, the algorithms alternate between model optimization and source weight updates. 
    \item \textbf{Experimental validation.}
    We perform extensive experiments on the \texttt{DomainNet} and \texttt{Office-Home} benchmarks under both multi-source transfer learning and multi-task learning settings.
    In the 10-shot multi-source transfer scenario, \ourmethod{} consistently outperforms strong baselines, achieving average accuracy improvements of 1.3\% on DomainNet and 1.4\% on Office-Home.
    Moreover, in the multi-task learning setting, \ourmethod{} surpasses state-of-the-art task-weighting methods by 0.7\% on \texttt{DomainNet} and 0.4\% on \texttt{Office-Home} in terms of overall average performance.
    In addition, we conduct comprehensive ablation and robustness studies, including analyses under varying target-shot regimes, weight visualization, and computational efficiency evaluations, to further validate the theoretical insights and practical robustness of \ourmethod{}.

\end{itemize}


This work is an extension of our previous conference paper~\cite{zhang2025apwdsit}, and the additional contributions are summarized as follows. 
First, the journal version presents a more explicit formulation for jointly optimizing transfer quantities and source weights, together with complete theoretical proofs. 
In addition, we provide further theoretical analysis, including an intuitive interpretation of why utilizing all source samples remains optimal under joint optimization in Section~\ref{subsection_multi-Source Transfer Learning}, as well as comparisons with related theoretical frameworks in Remark~\ref{remark_Connection to Prior Work} and Table~\ref{tab:comparison}.
Second, in the practical algorithm part in Section~\ref{section_Practical_Algorithms}, we further develop a multi-task learning algorithm based on the proposed multi-source transfer learning algorithm.
Finally, the experimental section is substantially strengthened with larger-scale datasets, more comprehensive baselines, and evaluations of the proposed multi-task learning extension, along with extensive analyses including ablation studies, weight visualization, robustness evaluation, computational efficiency analysis, and compatibility with Low-Rank Adaptation (LoRA).

The remainder of this paper is organized as follows. Section~\ref{related_work} reviews the related work. Section~\ref{section_Preliminaries} introduces several preliminary results used in our analysis and presents the problem formulation. Section~\ref{section_main_result} develops the main theoretical results to calculate the optimal source weights. Section~\ref{section_Practical_Algorithms} introduces practical algorithms. Section~\ref{section_Experiments} empirically evaluates the proposed framework and validates the theoretical findings.

\section{Related Work} 
\label{related_work}
\subsection{Multi-source Transfer Learning}

Multi-source transfer learning (MSTL) leverages knowledge from multiple related source domains or tasks to improve performance on a target task, addressing challenges such as domain shift and negative transfer~\cite{liu2021task,wang2020multisource,zhou2021deep}. By emphasizing relevant sources while suppressing less informative ones, MSTL enhances generalization in heterogeneous settings, where effective learning requires balancing source contributions and mitigating distribution discrepancies.

From the perspective of the \emph{transfer object}, multi-source transfer learning methods can be broadly categorized into \textbf{model-based} and \textbf{sample-based} transfer~\cite{zhuang2020comprehensive}. Model-based approaches leverage pretrained source models through fine-tuning or parameter adaptation~\cite{wan2022uav}, whereas sample-based methods jointly train on target and weighted source samples, enabling more direct exploitation of task-relevant source data~\cite{zhang2024revisiting_MADA,shui2021aggregating_WADN,li2021dynamic}.

From the perspective of the \emph{transfer strategy}, existing methods can be further divided into \textbf{alignment-based} and \textbf{matching-based} approaches~\cite{zhao2024more}. Alignment-based methods reduce domain discrepancy by explicitly aligning feature distributions~\cite{li2021multi,zhao2021madan,li2021dynamic}, while matching-based methods emphasize informative sources or samples via selective weighting~\cite{guo2020multi,shui2021aggregating_WADN,tong2021mathematical,wu2024h_Hensemble}. Our method is more closely aligned with the \textbf{sample-based, matching-based} category, as it adopts a unified framework that jointly optimizes transfer quantities and source weights for weighted joint training.

\subsection{Task weighting}

Task weighting, also referred to as source weighting, is a representative class of \textbf{matching-based} approaches in multi-source transfer learning and multi-task learning, as it regulates the contribution of different tasks without explicitly aligning feature distributions. Its objective is to balance competing task objectives during joint optimization. Early approaches relied on \textbf{static weighting}, assigning fixed weights based on heuristics such as task priority or dataset size \cite{caruana1997multitask}, but these methods lack adaptability to dynamic task interactions. Recent studies have therefore focused on \textbf{dynamic task weighting} strategies. For example, uncertainty-based weighting was introduced in \cite{kendall2018multi}, GradNorm balances tasks by normalizing gradient magnitudes \cite{chen2018gradnorm}, and DWA adjusts weights according to task learning speeds \cite{liu2019end}. From a multi-objective optimization perspective, MGDA \cite{sener2018multi} formulates task weighting as a Pareto-optimal optimization problem.

Recent work has further extended task weighting to large-scale pretrained models, primarily through gradient-based matching mechanisms. For instance, task reweighting via gradient alignment has been studied in \cite{zheng2021libra}, while conflicting gradients are mitigated by CAGrad \cite{liu2021conflict}. Other studies explore \textbf{meta-learning}–based strategies \cite{sun2020mtadam} and task affinity to automate weighting. Despite these advances, task weighting remains challenged by nonstationary task relationships \cite{maninis2022rotograd}, scalability to large task sets, and the lack of strong theoretical guarantees \cite{du2023game,chen2023flix}. Benchmark datasets such as \textbf{Meta-Dataset} \cite{triantafillou2020meta} have been introduced to facilitate standardized evaluation.

Most existing task-weighting methods operate at the model level, assigning scalar weights to source models in a mixture-of-experts fashion \cite{mansour2008domain}. These approaches typically rely on task similarity \cite{long2015learning}, domain divergence \cite{zhao2018adversarial}, or validation performance \cite{chen2018domain}. In contrast, only a limited number of studies consider sample-level weighting, where source samples are reweighted and jointly trained with the target data in a unified framework~ \cite{shui2021aggregating_WADN,zhang2024revisiting_MADA}.
Moreover, most existing task-weighting methods implicitly assume access to all source samples, and the resulting weights are independent of the transfer quantity. While recent works have begun to study the problem of optimally selecting transferable source samples or transfer quantities~\cite{jain2023data,li2024scalable,zhang2025theoretical}, these efforts are largely decoupled from task-weight optimization. Our work provides a unified treatment that jointly optimizes task weights and transfer quantities, enabling principled control over both which sources to transfer from and how much to transfer.

\subsection{Transfer Learning Theory}

Existing theoretical studies in transfer learning can generally be divided into two primary categories. The first category aims to define and quantify the similarity or relatedness between source and target tasks. Numerous metrics have been proposed in this context, including the $l_2$ distance~\cite{long2014transfer}, optimal transport cost~\cite{courty2016optimal}, LEEP (Log Expected Empirical Prediction)~\cite{nguyen2020leep}, Wasserstein distance~\cite{shui2021aggregating_WADN}, OTCE (Optimal Transport-based Conditional Entropy)~\cite{chen2022otce}, LogME \cite{you2021logme}, NCE\cite{tan2020survey}, GBC (Geometric-Based Correlation) \cite{liu2021geometric}, and maximal correlation-based measures~\cite{lee2019learning_MCW}. For example, LEEP \cite{nguyen2020leep} provides a probabilistic framework for transferability estimation by comparing model predictions on the target task. Meanwhile, OTCE \cite{chen2022otce} leverages optimal transport theory to quantify transferability under distribution shifts.  
These measures provide principled ways to assess how well knowledge from a source task can potentially benefit a target task. 

This work belongs to the second group, which is dedicated to developing theoretical measures that assess and bound the generalization error in transfer learning scenarios.
Various generalization error measures have been proposed to guide the assignment of task or source weights, including $f$-divergence~\cite{harremoes2011pairs}, mutual information 
~\cite{bu2020tightening}, $\mathcal{H}$-score~\cite{wu2024h_Hensemble}, and $\X^2$-divergence~\cite{tong2021mathematical}.
Specifically, the $\mathcal{H}$-score analytically characterizes the expected log-loss on the target task under a fixed source feature representation, thereby providing a principled basis for evaluating target predictive performance in transfer learning.
Furthermore, a recently developed generalization measure based on K-L divergence~\cite{zhang2025theoretical} provides a rigorous foundation for determining optimal transfer quantities. In this work, we demonstrate that such a measure offers a more direct alignment with the cross-entropy loss---a standard objective in machine learning---than previous metrics. Consequently, we extend this metric and framework to jointly optimize source weights and transfer quantities.

\section{Preliminaries}
\label{section_Preliminaries}

In this section, we introduce the theoretical foundations of our framework. 
We first present the K-L divergence--based generalization error measure in Section~\ref{subsection_K-L divergence based measure}, 
which serves as the optimization objective. 
We then review the asymptotic normality of the MLE in Section~\ref{subsection_Asymptotic Normality of the MLE}, 
which enables characterizing estimation variance. 
Finally, we formulate the multi-source transfer learning problem and its joint optimization objective in Section~\ref{subsection_Problem Formulation}.


\subsection{K-L Divergence-Based Measure}
\label{subsection_K-L divergence based measure}
We introduce a K-L divergence-based measure as the generalization error measure. 

\begin{definition}[The K-L divergence \cite{cover1999elements}]
The K-L divergence $D\left(P \middle\| Q\right)$ measures the difference between two probability distributions \( P(X) \) and \( Q(X) \) over the same probability space. It is defined as:
\begin{align}
    D\left(P \middle\| Q\right) = \sum_{x \in \mathcal{X}} p(x) \log \frac{p(x)}{q(x)}.\notag
\end{align}
\end{definition}

In this work, we use the expectation of K-L divergence between the true distribution of target task $P_{X;{\underline{\theta}}_0}$ and the distribution $P_{X;\hat{{\underline{\theta}}}}$ learned from training samples as the generalization error measure, i.e., 
\begin{align}
\label{eq:KL1}
\mathbb{E}\left[D(P_{X;{\underline{\theta}}_0}||P_{X;\hat{{\underline{\theta}}}})\right]
.\end{align}
Compared to other measures, the K-L divergence exhibits a closer correspondence with the generalization error measured by the cross-entropy loss; we formally justify this relationship in the Appendix.

\subsection{Asymptotic Normality of the MLE}
\label{subsection_Asymptotic Normality of the MLE}

When attempting to recover the true parameter vector $ \underline{\theta}^* $ from independent and identically distributed (i.i.d.) observations generated according to the distribution $ P_{X;\underline{\theta}^*} $, we let $ \mathcal{D} $ denote a collection of $ n $ such i.i.d. samples. The maximum likelihood estimator (MLE) is then defined as the maximizer of the empirical log-likelihood:  
\begin{align}
\label{mle_target}
\hat{{\underline{\theta}}}_{\mathrm{MLE}}  
    = \argmax_{{\underline{\theta}}} \; \frac{1}{n} \sum_{x \in \mathcal{D}} \log P_{X;{\underline{\theta}}}(x).
\end{align}

Provided that the underlying distribution satisfies the standard regularity assumptions, the MLE exhibits the well-known property of \textbf{asymptotic normality}~\cite{van2000asymptotic}. Specifically, as the sample size increases, the distribution of the normalized estimation error converges in law to a multivariate Gaussian distribution:
\begin{align}
\label{eq:asymptotic_normality}
\sqrt{n}\left( \hat{{\underline{\theta}}}_{\mathrm{MLE}} - \underline{\theta}^* \right) 
    \xrightarrow{d} \mathcal{N}\!\left( 0, J(\underline{\theta}^*)^{-1} \right),
\end{align}
where the notation ``${-1}$'' indicates the matrix inverse, and $ J(\underline{\theta}) $ denotes the Fisher information matrix~\cite{cover1999elements}. The Fisher information matrix, which characterizes the amount of information carried by the distribution about the parameter, is defined as
\begin{align}
    J({\underline{\theta}})^{d \times d} 
    &= \mathbb{E} \left[ 
        \left( \frac{\partial}{\partial {\underline{\theta}}} \log P_{X;{\underline{\theta}}} \right) 
        \left( \frac{\partial}{\partial {\underline{\theta}}} \log P_{X;{\underline{\theta}}} \right)^{\!T} 
    \right].
\end{align}
Intuitively, the Fisher information matrix characterizes the local geometry of the log-likelihood function with respect to the parameter $\underline{\theta}$. It captures how sensitively the distribution responds to perturbations along different parameter directions, thereby reflecting the statistical identifiability of $\underline{\theta}$.

\subsection{Problem Formulation}
\label{subsection_Problem Formulation}

A multi-source transfer learning framework consists of a target 
task $\mathcal{T}$ and multiple source tasks 
$\{\mathcal{S}_1, \dots, \mathcal{S}_K\}$ that provide auxiliary 
information to improve performance on $\mathcal{T}$. 
To generalize the analysis, we model $\mathcal{T}$ as a parameter 
estimation problem governed by an underlying distribution 
$P_{X;\underline{\theta}}$, where $\underline{\theta}$ denotes 
the parameter of interest and $X$ represents a generic random 
variable corresponding to the data. For example, in supervised 
learning, $X=(Z,Y)$ represents the joint distribution over input 
features $Z$ and labels $Y$. The objective is to accurately 
estimate the true value of $\underline{\theta}$, which in deep 
learning corresponds to optimizing the network parameters of 
$\mathcal{T}$. Furthermore, we assume that the source tasks and the target task follow the same parametric model and share the same input space $\mathcal{X}$.
For notational clarity, we present only the case where $\mathcal{X}$ is discrete, while the theoretical results extend straightforwardly to continuous domains.

The target task $\mathcal{T}$ has $N_0$ i.i.d.\ training samples 
drawn from $P_{X;\underline{\theta}_0}$, where 
$\underline{\theta}_0 \in \mathbb{R}^d$, and the training set is 
denoted by $X^{N_0}$. Similarly, each source task 
$\mathcal{S}_i$ has $N_i$ i.i.d.\ samples from 
$P_{X;\underline{\theta}_i}$, where 
$i \in [1, K]$ and $\underline{\theta}_i \in \mathbb{R}^d$, 
denoted by $X^{N_i}$.
During training, each sample is weighted in the gradient descent 
procedure according to the source weight associated with its 
corresponding task. Motivated by this weighting mechanism, we 
formulate the training process as a parameter estimation problem 
and define the corresponding estimator.
Specifically, $\hat{{\underline{\theta}}}$ is denoted as the MLE based on the $N_0$ samples from $\cT$ and $n_1,\dots,n_K$ samples from $\cS_1,\dots,\cS_K$ with weights $w_1,\dots,w_K$, where $n_i \in [0,N_i]$ and $w_i \in [0,+\infty]$, i.e.,
\begin{align}
\label{mle_defination}
    \hat{{\underline{\theta}}} = \argmax_{{\underline{\theta}}} &{\sum_{x \in X^{N_{0}}}} \log P_{X;{\underline{\theta}}}(x)+{\sum_{i=1}^{K}\sum\limits_{x \in X^{n_{i}}}} w_i\log P_{X;{\underline{\theta}}}(x),
\end{align}
where we require $w_i \ge 0$ because negative weights would subtract sample
likelihoods, contradict the statistical meaning of MLE and turn the
weighted log-likelihood into a non-concave and unbounded objective.

In this work, 
our goal is to derive the optimal transfer weight $w_1^*,\dots,w_K^*$ and transfer quantities $n_1^*,\dots,n_K^*$ of source tasks $\cS_1,\dots,\cS_K$ to minimize the K-L based measure between the true distribution of target task $P_{X;{\underline{\theta}}_0}$ and the distribution $P_{X;\hat{{\underline{\theta}}}}$ learned from training samples, \textit{i.e.},
\begin{align}
\label{optimization_problem}
&w_1^*,\dots,w_K^*,n_1^*,\dots,n_K^*=\notag\\
&\argmin_{w_1,\dots,w_k,n_1,\dots,n_K} \mathbb{E}\left[D(P_{X;{\underline{\theta}}_0}||P_{X;\hat{{\underline{\theta}}}})\right].
\end{align}

For theoretical analysis, we assume a compact parameter space 
$\Theta$ and standard regularity conditions on 
$P_{X;\underline{\theta}}$, including differentiability, 
parameter-independent support, and finite, positive Fisher 
information.

\section{Theoretical Characterization of Optimal Source Weights and Transfer Quantities}
\label{section_main_result}


In this section, we present the main theoretical results of \ourmethod{}. 
We first characterize the optimal transfer quantity and the optimal source weight in the single-source setting, 
deriving closed-form expressions
in Section~\ref{subsection_Single-Source Transfer Learning}. 
We then extend the analysis to the multi-source setting, 
where the joint optimization problem admits a convex formulation and a tractable numerical solution in Section~\ref{subsection_multi-Source Transfer Learning}.


\subsection{Single-Source Transfer Learning}
\label{subsection_Single-Source Transfer Learning}

In this subsection, we first analyze the case where the parameter is one-dimensional in Theorem \ref{thm:one_source}. We then extend the method to the high-dimensional setting in Proposition \ref{prop:highdimension}. Throughout this paper, we denote scalar-valued parameters by $\theta$ and high-dimensional parameters by $\underline{\theta}$. In analyzing the target parameter estimation problem, directly computing the K-L measure is intractable in general. Therefore, we conduct our analysis primarily in the asymptotic regime, where the behavior of the estimator becomes more tractable. To begin with, we consider the transfer learning scenario where we have one target task $\cT$ with $N_0$ training samples and one source task $\cS_1$ with $N_1$ training samples. In this case, we aim to determine the optimal transfer quantity $n_1^*\in [1,N_1]$ and the optimal transfer weight $w_1^*\in [0,+\infty]$. To facilitate our mathematical derivations, we assume 
$N_0$ and $N_1$ are asymptotically comparable, and
the distance between the parameters of the target task and source task is sufficiently small (\textit{i.e.}, $\vert\theta_0-\theta_1\vert=O(\frac{1}{\sqrt{N_{0}}})$). Considering the similarity of low-level features among tasks of the same type~\cite{raghu2019transfusion},
this assumption is made without loss of generality, and similar assumptions have also been adopted in the prior literature~\cite{zhang2025theoretical}.

\begin{theorem}
\label{thm:one_source}(proved in the Appendix)
In the single-source setting with 1-dimensional models $P_{X;\theta_0}$ and $P_{X;\theta_1}$, we assume that $\theta_0,\theta_1 \in \mathbb{R}$ and $\vert\theta_0-\theta_1\vert=O(\frac{1}{\sqrt{N_{0}}})$. 
Then, the K-L measure $\mathbb{E}[D(P_{X;{\underline{\theta}}_0}||P_{X;\hat{{\underline{\theta}}}})]$ can be expressed as: 
    \begin{align}
    \label{thm:one_source_KL}
    \frac{1}{2}\bigg(\underbrace{\frac{N_0+w_{1}^2n_1}{\left(N_{0}+w_{1}n_1\right)^2}}_{\textnormal{variance~ term}}+\underbrace{\frac{w_{1}^2n_1^2}{(N_{0}+w_{1}n_1)^2}t}_{\textnormal{bias~term}}\bigg)+o\left(\frac{1}{N_{0}}\right),
\end{align}
where 
\begin{align}
t \defeq J\left(\theta_0\right)\left(\theta_1 - \theta_0\right)^2 .
\end{align}
For \textbf{optimal transfer quantity}, by minimizing the above expression, we obtain that 
maximizing the transfer quantity is optimal, 
i.e.,
\begin{align}
\label{optimal_quantity_singlesource}
n_1^* = N_1.
\end{align}

Moreover, the solution of \textbf{optimal transfer weight} $w_1^*$ is
\begin{align}
\label{optimal_weight_singlesource}
w_1^*=\frac{1}{1+tN_1}.
\end{align}

\end{theorem}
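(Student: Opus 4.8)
The plan is to work directly with the MLE in \eqref{mle_defination} specialized to one source task, expand it around $\theta_0$ using the asymptotic normality in \eqref{eq:asymptotic_normality}, and then compute the K-L measure via a second-order Taylor expansion. First I would write the weighted log-likelihood objective as $N_0 \cdot \frac{1}{N_0}\sum_{x\in X^{N_0}}\log P_{X;\theta}(x) + w_1 n_1 \cdot \frac{1}{n_1}\sum_{x\in X^{n_1}}\log P_{X;\theta}(x)$, so that $\hat\theta$ is the maximizer of a convex combination (up to scaling by $N_0 + w_1 n_1$) of the target and source empirical log-likelihoods. Setting the derivative to zero and linearizing the score functions around $\theta_0$ and $\theta_1$ respectively, I would solve for $\hat\theta - \theta_0$ as a ratio: the numerator combines the target score noise (mean $0$, variance $\asymp N_0 J(\theta_0)$) with the source score noise plus a deterministic drift term $w_1 n_1 J(\theta_1)(\theta_1-\theta_0)$, and the denominator is $\approx (N_0 + w_1 n_1) J(\theta_0)$ (using $|\theta_0-\theta_1| = O(1/\sqrt{N_0})$ so $J(\theta_1) = J(\theta_0) + o(1)$ suffices at this order).

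Next I would take expectations. Since $D(P_{X;\theta_0}\|P_{X;\hat\theta}) = \frac{1}{2}J(\theta_0)(\hat\theta - \theta_0)^2 + o((\hat\theta-\theta_0)^2)$ by the standard local quadratic approximation of K-L divergence, the measure reduces to $\frac{1}{2}J(\theta_0)\,\mathbb{E}[(\hat\theta-\theta_0)^2]$. Decomposing $\mathbb{E}[(\hat\theta-\theta_0)^2] = \var(\hat\theta) + (\mathbb{E}[\hat\theta]-\theta_0)^2$: the variance comes from the independent target and source score fluctuations, giving $\frac{N_0 \cdot (1/J(\theta_0)) + w_1^2 n_1 \cdot (1/J(\theta_0))}{(N_0+w_1 n_1)^2}$ after dividing through, i.e. $\frac{N_0 + w_1^2 n_1}{(N_0+w_1 n_1)^2 J(\theta_0)}$; the squared bias comes from the drift term, giving $\frac{w_1^2 n_1^2 (\theta_1-\theta_0)^2 J(\theta_0)^2}{(N_0+w_1 n_1)^2 J(\theta_0)^2} \cdot$(something) — multiplying by $\frac{1}{2}J(\theta_0)$ and substituting $t = J(\theta_0)(\theta_1-\theta_0)^2$ yields exactly \eqref{thm:one_source_KL}. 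Care is needed to track that all cross terms between the zero-mean target noise and the source drift vanish in expectation, and that the remainder is genuinely $o(1/N_0)$ given the scaling assumptions.

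For the optimization claims, I would first fix $w_1$ and treat the expression as a function of $n_1 \in [1,N_1]$. Writing $a = w_1 n_1$, the leading term is $\frac{1}{2}\cdot\frac{N_0 + w_1 a + a^2 t}{(N_0+a)^2}$; I would compute $\frac{\partial}{\partial n_1}$ (equivalently $\frac{\partial}{\partial a}$ since $w_1>0$) and show it is monotone in the relevant regime — substituting $w_1 = 1/(1+tN_1)$ or arguing more generally that for any fixed $w_1$ the objective, when further optimized over $w_1$, is decreasing in $n_1$ — so the minimizer pushes $n_1$ to its boundary $N_1$. Then with $n_1 = N_1$ fixed, I would minimize over $w_1 \ge 0$: setting $\frac{\partial}{\partial w_1}\big[\frac{N_0 + w_1^2 N_1 + w_1^2 N_1^2 t}{(N_0 + w_1 N_1)^2}\big] = 0$, the numerator of the derivative is a linear expression in $w_1$ whose root is $w_1^* = \frac{1}{1+tN_1}$, and a sign check confirms it is a minimum.

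\textbf{Main obstacle.} The delicate part is the joint optimization over $(w_1, n_1)$ rather than the two marginal optimizations: one must verify that the stationarity argument in $n_1$ is valid \emph{at the optimal weight for each $n_1$} (or uniformly in $w_1$), i.e. that no interior saddle or the coupling between the variance and bias terms creates a configuration where a smaller $n_1$ with a compensating larger $w_1$ does strictly better. Establishing that maximizing $n_1$ is globally optimal — and giving the promised conceptual interpretation of why extra source samples never hurt once the weight is free to shrink — is the conceptual crux; the rest is bookkeeping on Gaussian moments and a one-variable calculus minimization.
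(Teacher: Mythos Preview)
Your proposal is correct and follows essentially the same route as the paper: a bias--variance decomposition of $\mathbb{E}[(\hat\theta-\theta_0)^2]$ via the asymptotic normality of the weighted MLE, converted to the K-L measure through the local quadratic approximation, and then the joint optimization handled by first profiling out $w_1^*(n_1)=1/(1+tn_1)$ and checking that the resulting one-variable expression is strictly decreasing in $n_1$ (exactly the resolution you flag in your ``main obstacle'' paragraph). The only places the paper differs are cosmetic or in rigor: it centers the expansion at the deterministic point $E_{\hat\theta}=(N_0\theta_0+w_1n_1\theta_1)/(N_0+w_1n_1)$ rather than at $\theta_0,\theta_1$ separately, and for the step $\mathbb{E}[D(P_{\theta_0}\|P_{\hat\theta})]=\tfrac12 J(\theta_0)\,\mathbb{E}[(\hat\theta-\theta_0)^2]+o(1/N_0)$ it does not rely on the pointwise Taylor expansion alone but invokes a Sanov-type large-deviations bound to show the contribution from atypical samples (where $|\hat\theta-\theta_0|$ is not small) is exponentially negligible---a uniform-integrability issue you should be ready to address.
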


In Fig.~\ref{bias_variance_trade_off}, we provide an intuitive explanation for the existence of the optimal weight $w_1^*$ defined in \eqref{optimal_weight_singlesource}.
We now discuss several properties of the optimal weight $w_1^*$. We observe that as the distance between $ \theta_0 $ and $\theta_1$ increases, $t$ increases and $w_1^*$ decreases accordingly. In particular, as $|\theta_0 - \theta_1| \to 0$, $w_1^*$ approaches one, whereas as the discrepancy becomes sufficiently large, $w_1^*$ approaches zero. This shows that the weighting strategy reduces the influence of the source domain when it exhibits substantial discrepancies from the target.
Moreover, as the quantity of source samples $ N_1 $ increases, $w_1^*$ also decreases. This illustrates that the derived weight adaptively controls the trade-off between the target and source information, preventing the source from dominating even when it is abundant.
\begin{figure}[t]
\centering
\includegraphics[width=0.48\textwidth]{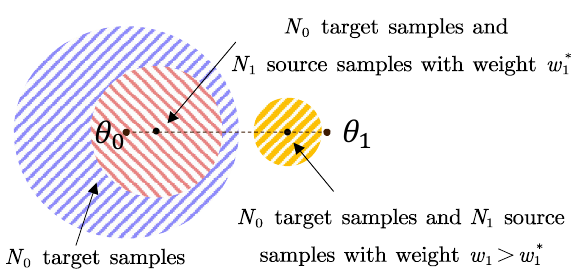}
\caption{The three circles represent the errors corresponding to transfer weight $w_1=0$, $w_1=w_1^*$, and $w_1>w_1^*$. The distance from each circle’s center to $\theta_0$ represents the bias of estimation, and the radius represents the variance. As the transfer weight of source increases, the bias term increases,  while the variance term decreases for $w_1 \in [0,1]$ and increases for $w_1 \in (1, +\infty)$. The optimal $w_1^*$ achieves the best trade-off between them.}
\label{bias_variance_trade_off}
\end{figure}

In the following, we extend the result of Theorem \ref{thm:one_source} to the high-dimensional setting.

\begin{proposition}  
\label{prop:highdimension}(proved in the Appendix)
In the single-source setting with $d$-dimensional models $P_{X;{\underline{\theta}}_0}$ and $P_{X;{\underline{\theta}}_1}$, we assume that ${\underline{\theta}}_0,{\underline{\theta}}_1 \in \mathbb{R}^d$ and $\vert\vert{\underline{\theta}}_0-{\underline{\theta}}_1\vert\vert=O(\frac{1}{\sqrt{N_{0}}})$. Then,
the K-L measure takes the following form:
\begin{align} 
\label{Proposition:one_source_eq_rewrite}
    \frac{d}{2}\left({\frac{N_0+w_{1}^2n_1}{\left(N_{0}+w_{1}n_1\right)^2}}+{\frac{w_{1}^2n_1^2}{(N_{0}+w_{1}n_1)^2}t}\right) + o\left(\frac{1}{N_0}\right),
\end{align}
where
\begin{align}
    t \defeq \frac{({\underline{\theta}}_1 - {\underline{\theta}}_0)^\top J({\underline{\theta}}_0) ({\underline{\theta}}_1 - {\underline{\theta}}_0)}{d}.
\end{align}

Here, $t$ is a scalar, $J({\underline{\theta}}_0) \in \mathbb{R}^{d \times d}$ denotes the Fisher information matrix evaluated at ${\underline{\theta}}_0$, and $({\underline{\theta}}_1 - {\underline{\theta}}_0) \in \mathbb{R}^d$ is the coordinate-wise difference between the two parameter vectors. Finally, the \textbf{optimal transfer quantity} is the same form as \eqref{optimal_quantity_singlesource} and the \textbf{optimal transfer weight} $w_1^*$ is the same form as \eqref{optimal_weight_singlesource}. 
\end{proposition}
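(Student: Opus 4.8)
The plan is to reduce Proposition~\ref{prop:highdimension} to a $d$-fold application of the scalar analysis already carried out in Theorem~\ref{thm:one_source}. First I would set up the weighted MLE $\hat{\underline{\theta}}$ from \eqref{mle_defination} specialized to one source, and invoke the asymptotic normality machinery of Section~\ref{section_Preliminaries}. The key observation is that the weighted log-likelihood $\sum_{x \in X^{N_0}} \log P_{X;\underline{\theta}}(x) + w_1 \sum_{x \in X^{n_1}} \log P_{X;\underline{\theta}}(x)$ is, up to the weighting, a sum of two i.i.d. likelihood contributions from populations with parameters $\underline{\theta}_0$ and $\underline{\theta}_1$ respectively. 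A Taylor expansion of the score equation around $\underline{\theta}_0$, combined with the small-distance assumption $\|\underline{\theta}_0 - \underline{\theta}_1\| = O(1/\sqrt{N_0})$, gives
\begin{align}
\hat{\underline{\theta}} - \underline{\theta}_0 = \frac{1}{N_0 + w_1 n_1}\Big( \text{(centered score noise)} + w_1 n_1 (\underline{\theta}_1 - \underline{\theta}_0) \Big) J(\underline{\theta}_0)^{-1} + o_p\!\left(\tfrac{1}{\sqrt{N_0}}\right),
\end{align}
so that $\mathbb{E}[(\hat{\underline{\theta}} - \underline{\theta}_0)(\hat{\underline{\theta}} - \underline{\theta}_0)^\top]$ has a variance part scaling like $\frac{N_0 + w_1^2 n_1}{(N_0 + w_1 n_1)^2} J(\underline{\theta}_0)^{-1}$ and a bias part $\frac{w_1^2 n_1^2}{(N_0 + w_1 n_1)^2} (\underline{\theta}_1 - \underline{\theta}_0)(\underline{\theta}_1 - \underline{\theta}_0)^\top$.

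Next I would use the standard second-order expansion of the K-L divergence: for $\underline{\theta}$ near $\underline{\theta}_0$, $D(P_{X;\underline{\theta}_0} \| P_{X;\underline{\theta}}) = \frac{1}{2}(\underline{\theta} - \underline{\theta}_0)^\top J(\underline{\theta}_0)(\underline{\theta} - \underline{\theta}_0) + o(\|\underline{\theta} - \underline{\theta}_0\|^2)$. Taking expectations and substituting the covariance decomposition above, the $J(\underline{\theta}_0)^{-1}$ in the variance term cancels against the $J(\underline{\theta}_0)$ from the K-L Hessian, contributing $\frac{1}{2}\tr(I_d) = \frac{d}{2}$ times the variance scalar; the bias term contributes $\frac{1}{2}\frac{w_1^2 n_1^2}{(N_0+w_1 n_1)^2} (\underline{\theta}_1 - \underline{\theta}_0)^\top J(\underline{\theta}_0) (\underline{\theta}_1 - \underline{\theta}_0)$, which is exactly $\frac{d}{2} \cdot \frac{w_1^2 n_1^2}{(N_0+w_1 n_1)^2} t$ by the definition of $t$ in the statement. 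This yields \eqref{Proposition:one_source_eq_rewrite}. Since the resulting expression is identical to \eqref{thm:one_source_KL} up to the global factor $d$, the minimization over $n_1$ and $w_1$ is unchanged, so $n_1^* = N_1$ and $w_1^* = \frac{1}{1 + t N_1}$ follow verbatim from the optimization already performed in Theorem~\ref{thm:one_source} (monotonicity in $n_1$ for fixed $w_1$ after re-optimizing $w_1$, then a one-variable convex minimization in $w_1$).

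I expect the main obstacle to be the rigorous handling of the multivariate bias–variance decomposition of the weighted MLE: unlike the scalar case, one must track that the score noise and the deterministic drift $w_1 n_1(\underline{\theta}_1 - \underline{\theta}_0)$ interact through the matrix $J(\underline{\theta}_0)^{-1}$ without cross terms surviving at order $1/N_0$ (the cross term has zero mean), and that the remainder terms in the joint Taylor expansion of the two-population score are genuinely $o(1/N_0)$ after taking expectations — this requires the small-distance assumption to control the bias contribution from $\underline{\theta}_1$ and uniform integrability / local regularity of the log-likelihood to promote the convergence in distribution to convergence of second moments. Once the covariance of $\hat{\underline{\theta}} - \underline{\theta}_0$ is pinned down to the stated order, the rest is the algebra of contracting a rank-one-plus-scaled-identity matrix against $J(\underline{\theta}_0)$, which is routine and mirrors the scalar computation.
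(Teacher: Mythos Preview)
Your approach is correct and essentially mirrors the paper's proof, which explicitly centers at $E_{\hat{\underline{\theta}}}$ (the multivariate analogue of Lemma~\ref{lemma_E_theta}) before applying the trace identity $\mathbb{E}[D(P_{X;\underline{\theta}_0}\|P_{X;\hat{\underline{\theta}}})]=\tfrac{1}{2}\tr\!\big(J(\underline{\theta}_0)\,\mathbb{E}[(\hat{\underline{\theta}}-\underline{\theta}_0)(\hat{\underline{\theta}}-\underline{\theta}_0)^\top]\big)+o(1/N_0)$. One small slip: in your displayed expansion the drift inside the bracket should be $w_1 n_1\,J(\underline{\theta}_0)(\underline{\theta}_1-\underline{\theta}_0)$ (the mean of the source score evaluated at $\underline{\theta}_0$), so that the outer $J(\underline{\theta}_0)^{-1}$ cancels it to $w_1 n_1(\underline{\theta}_1-\underline{\theta}_0)$; your subsequently stated bias covariance and final KL contribution are correct, so this is presumably just a typo.
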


Compared with Theorem \ref{thm:one_source}, Proposition \ref{prop:highdimension} retains an analogous structural form in its mathematical expression. This structural resemblance enables us to follow a conceptually similar analytical route to determine the corresponding optimal quantity of knowledge transfer. 
It is worth emphasizing that the appearance of the Fisher information matrix $J$ in the $t$-term is a direct consequence of the theoretical derivation, rather than a modeling choice or an explicitly imposed mechanism. The resulting expression admits a clear interpretation: the Fisher information matrix implicitly induces a dimension-weighted aggregation over parameter directions, where the contribution of each direction is determined by the sensitivity of the likelihood. Parameter directions along which the likelihood is more sensitive to perturbations contribute more prominently to the derived measure, while directions with lower sensitivity have a reduced influence. 


In addition, a closer inspection of Equation \eqref{Proposition:one_source_eq_rewrite} reveals that the associated K-L measure exhibits a growth trend as the parameter dimensionality d increases. From a practical perspective, this scaling trend indicates that as models grow in complexity—characterized by increasingly large parameter spaces—the learning process becomes more intricate, thereby amplifying the challenge of achieving effective knowledge transfer. This observation is consistent with established findings in the literature \cite{tong2021mathematical}, which indicate that increasing model complexity inherently exacerbates the difficulty of cross-task generalization.

\begin{remark} [Connection to Prior Work]\label{remark_Connection to Prior Work}
The expression in Theorem \ref{thm:one_source} and Proposition \ref{prop:highdimension} coincides with the results obtained by \cite{tong2021mathematical}. Equation \eqref{mle_defination} can be transformed to
\begin{align}
    &\hat{\theta}=\argmax_{\theta} \frac{\sum\limits_{x \in X^{N_0}} \log P_{X;\theta}(x) + \sum\limits_{x \in X^{N_1}} w_1 \log P_{X;\theta}(x)}{N_0 + w_1N_1},
    \notag \\&=\argmax_{\theta}\mathbb{E}_{\hat{P}_{\mathrm{mix}}}\left[\log P_{X;\theta}(x)\right],
    \label{remark_eq}
\end{align}
where
\begin{align}
\label{P_MIX}
\hat{P}_{\mathrm{mix}}
:= \frac{N_0}{N_0 + w_1 N_1}\hat{P}(X^{N_0}_0)
 + \frac{w_1 N_1}{N_0 + w_1 N_1}\hat{P}(X^{N_1}_1).
\end{align}
 
Substituting the optimal weight \eqref{optimal_weight_singlesource} into \eqref{P_MIX}, we have
\begin{align}
\hat{P}_{\mathrm{mix}}
:= \frac{t+\frac{1}{N_1}}{t+\frac{1}{N_0}+\frac{1}{N_1}}\hat{P}(X^N_0)+\frac{\frac{1}{N_0}}{t+\frac{1}{N_0}+\frac{1}{N_1}}\hat{P}(X^N_1).
\end{align}

This formulation exhibits a notable similarity to the conclusions 
in~\cite{tong2021mathematical}. In particular, both frameworks 
contain terms that scale with the inverse of the transfer 
quantities, highlighting the importance of accounting for sample 
size when determining the transfer weights.
However, our framework differs from their work not only in the 
weighting mechanism but also in the treatment of model discrepancy 
and dimensionality. Specifically, Tong et al.~\cite{tong2021mathematical} 
adopt a model-based weighting strategy, whereas our approach employs 
a sample-based weighting scheme. Their analysis quantifies task 
similarity via a $\chi^2$-distance, while the discrepancy term $t$ 
in our formulation measures model distance directly in the parameter 
space and incorporates model complexity through Fisher information. 
Moreover, parameter dimensionality enters the weighting mechanism 
differently in the two approaches.


\end{remark}



\subsection{Multi-Source Transfer Learning
}
\label{subsection_multi-Source Transfer Learning}
In this subsection, we first present the expression of the K-L measure in the multi-source setting, as stated in Theorem \ref{thm:Weighted_multi_source}. By minimizing this expression, we derive a method to compute the optimal source weights and analyze its computational complexity. Moreover, we provide an interpretation of the theoretical result that using all available samples is optimal when the weights are adjustable.


\begin{theorem}
\label{thm:Weighted_multi_source}(proved in the Appendix)
In the multi-source setting with $d$-dimensional models $P_{X;\underline{\theta}_0}$ for the target task and $P_{X;\theta_i}$, $i\in[1,K]$ for the source tasks, we assume that $\underline{\theta}_0,\underline{\theta}_1 \dots \underline{\theta}_K\in \mathbb{R}^d$ and $\vert\vert\underline{\theta}_0-\underline{\theta}_i\vert\vert=O(\frac{1}{\sqrt{N_{0}}})$. Then, the K-L~measure is given by 
\begin{align}
\label{eq:Weighted_multisourcetarget}
\frac{d}{2}\left(\frac{N_0}{(N_0+{s})^2}+\frac{{s}^2}{(N_0+{s})^2}t\right)+ o\left(\frac{1}{N_0}\right),
\end{align}
where we denote ${{b}_i}=n_iw_i$, ${s}=\sum\limits_{i=1}^{K}{{b}_i}$, ${{\alpha}_i}=\frac{{{b}_i}}{{s}}$, and 
\begin{align}
\label{t_multi_source}
t=\frac{\underline{\alpha}^T\left(\operatorname{diag}\left(\frac{d}{n_1}, \dots, \frac{d}{n_K}\right)
+\Theta^T{J}(\underline{\theta}_0)\Theta\right)\underline{\alpha}}{d}.
\end{align}
In \eqref{t_multi_source}, $\underline{\alpha}=\left[\alpha_1,\dots,\alpha_K\right]^T$ is a K-dimensional vector, and 
\begin{align}
{{\Theta}}^{d\times K}=\left[{{\underline{\theta}}_1}-{{\underline{\theta}}_0},\dots,{{\underline{\theta}}_K}-{{\underline{\theta}}_0} \right].
\end{align}
For \textbf{optimal transfer quantities}, by minimizing \eqref{eq:Weighted_multisourcetarget}, we obtain that 
maximizing the transfer quantities is optimal, i.e., $n_i^{*} = N_i$.
Moreover, we propose a method to compute the \textbf{optimal source weights}, which will be elaborated in the following paragraph.
\end{theorem}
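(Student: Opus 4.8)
\textbf{Proof proposal for Theorem~\ref{thm:Weighted_multi_source}.}

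The plan is to generalize the single-source derivation in Theorem~\ref{thm:one_source} and Proposition~\ref{prop:highdimension} by computing the asymptotic bias and variance of the weighted MLE $\hat{\underline{\theta}}$ defined in \eqref{mle_defination}. First I would write the first-order optimality (score) equation for $\hat{\underline{\theta}}$ and perform a Taylor expansion of each score term around the respective true parameters $\underline{\theta}_0$ (for the target samples) and $\underline{\theta}_i$ (for the $\mathcal{S}_i$ samples). Using the asymptotic normality \eqref{eq:asymptotic_normality} and the small-discrepancy assumption $\|\underline{\theta}_0-\underline{\theta}_i\|=O(1/\sqrt{N_0})$, the target-sample scores contribute a zero-mean Gaussian fluctuation while the source-sample scores contribute both a fluctuation and a deterministic shift proportional to $J(\underline{\theta}_0)(\underline{\theta}_i-\underline{\theta}_0)$ (plus higher-order terms absorbed into $o(1/N_0)$). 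Collecting the target term with weight $N_0$ and the source term $i$ with effective weight $b_i=n_iw_i$, the estimator decomposes, to leading order, as $\hat{\underline{\theta}}-\underline{\theta}_0 \approx (N_0+s)^{-1}\bigl(\text{Gaussian noise} + \sum_i b_i (\underline{\theta}_i-\underline{\theta}_0)\bigr)$ where $s=\sum_i b_i$, so the bias is $\frac{s}{N_0+s}\sum_i\alpha_i(\underline{\theta}_i-\underline{\theta}_0) = \frac{s}{N_0+s}\Theta\underline{\alpha}$.

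Next I would compute the mean-squared error $\mathbb{E}[(\hat{\underline{\theta}}-\underline{\theta}_0)(\hat{\underline{\theta}}-\underline{\theta}_0)^\top]$ in the metric induced by $J(\underline{\theta}_0)$, since the standard second-order expansion of the K-L divergence gives $D(P_{X;\underline{\theta}_0}\|P_{X;\hat{\underline{\theta}}}) = \tfrac12 (\hat{\underline{\theta}}-\underline{\theta}_0)^\top J(\underline{\theta}_0)(\hat{\underline{\theta}}-\underline{\theta}_0) + o(\|\hat{\underline{\theta}}-\underline{\theta}_0\|^2)$. The variance contribution splits into the target piece, whose covariance under the normalization is $\propto N_0 J(\underline{\theta}_0)^{-1}$, and the independent source pieces, where the $i$-th contributes a covariance term of order $b_i^2/n_i \cdot J(\underline{\theta}_i)^{-1}$; since each batch of $n_i$ i.i.d.\ source samples has sample-score covariance $J(\underline{\theta}_i)\approx J(\underline{\theta}_0)$, taking the trace against $J(\underline{\theta}_0)$ reproduces the $d\,\frac{N_0}{(N_0+s)^2}$ variance term together with the $\frac{d}{n_i}$ entries in $\operatorname{diag}(d/n_1,\dots,d/n_K)$ inside \eqref{t_multi_source}. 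The bias contribution, taken in the $J(\underline{\theta}_0)$-metric, yields $\frac{s^2}{(N_0+s)^2}\cdot\frac{\underline{\alpha}^\top\Theta^\top J(\underline{\theta}_0)\Theta\,\underline{\alpha}}{d}\cdot d$, giving the $\Theta^\top J(\underline{\theta}_0)\Theta$ part of $t$. Combining these two pieces produces exactly \eqref{eq:Weighted_multisourcetarget} with $t$ as in \eqref{t_multi_source}, up to $o(1/N_0)$.

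For the optimal transfer quantities, I would fix all $b_i$ (equivalently $s$ and $\underline{\alpha}$) and observe that $n_i$ enters \eqref{eq:Weighted_multisourcetarget} only through $t$, monotonically decreasing in each $n_i$ via the $\frac{d}{n_i}\alpha_i^2$ term; since the prefactor $\frac{s^2}{(N_0+s)^2}$ is nonnegative and the variance term is $n_i$-independent once $s$ is fixed, enlarging $n_i$ to its maximum $N_i$ can only decrease the objective (and the decrease in $t$ can always be compensated, if desired, by readjusting $w_i$ to keep $b_i$ fixed, so there is no loss elsewhere). Hence $n_i^*=N_i$, exactly as in the single-source case. The remaining claim—existence of a numerical procedure for the optimal weights—reduces after substituting $n_i=N_i$ to minimizing \eqref{eq:Weighted_multisourcetarget} over $s\ge 0$ and $\underline{\alpha}$ in the simplex; I would note that for fixed $s$ the objective is a ratio of a quadratic form in $\underline{\alpha}$ (the bias) plus constants, which is convex in $\underline{\alpha}$ over the simplex, and then optimize over the scalar $s$, deferring the detailed construction to the discussion after the proof as stated.

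The main obstacle I anticipate is the careful bookkeeping of cross-terms and higher-order remainders in the multi-source Taylor expansion: one must verify that the cross-covariances between the target fluctuation and the source fluctuations vanish (by independence), that the products of bias terms with fluctuations are $o(1/N_0)$ (using $\|\underline{\theta}_i-\underline{\theta}_0\|=O(1/\sqrt{N_0})$ and $b_i=O(N_0)$), and that the replacement $J(\underline{\theta}_i)\to J(\underline{\theta}_0)$ inside every variance term incurs only $o(1/N_0)$ error. Making the normalization consistent across heterogeneous batch sizes $n_i$ while keeping the weights $w_i$ unnormalized is the delicate part; once the decomposition $\hat{\underline{\theta}}-\underline{\theta}_0=(N_0+s)^{-1}(\text{noise}+\text{bias})$ is rigorously established, the rest is the same second-order K-L expansion used in Proposition~\ref{prop:highdimension}.
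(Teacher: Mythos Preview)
Your proposal is correct and follows essentially the same route as the paper's proof in Appendix~\ref{appendix:Weighted_multi_source}: the paper also reduces to the bias--variance decomposition of $\hat{\underline{\theta}}-\underline{\theta}_0$ (phrased there via the intermediate quantity $E_{\hat{\underline{\theta}}}$ from Lemma~\ref{lemma_E_theta}), obtains the asymptotic covariance $\frac{N_0+\sum_i w_i^2 n_i}{(N_0+s)^2}J(\underline{\theta}_0)^{-1}$ by the multi-source analogue of \eqref{CLT2}, and then invokes the same second-order K-L expansion as in Proposition~\ref{prop:highdimension}. Your argument for $n_i^*=N_i$ (fix $b_i$, increase $n_i$ by shrinking $w_i$) is in fact slightly more explicit than the paper's one-line ``it is easy to observe'' justification of the same conclusion.
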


We next present the method to compute the optimal value of $w_i$ to minimize \eqref{eq:Weighted_multisourcetarget}. The minimization problem is
\begin{align}
(s^*, \underline{\alpha}^*) \gets \argmin\limits_{(s, \underline{\alpha})} \frac{d}{2}\Bigg(\frac{N_0}{(N_0+{s})^2}+\notag\\
\frac{{s}^2}{(N_0+{s})^2}t\Bigg)+ o\left(\frac{1}{N_0}\right).
\end{align}

We reformulate this optimization problem as a sequential optimization process, and explicitly formulate the constraints as follows.
\begin{align}
&(s^*, \underline{\alpha}^*) \gets \argmin\limits_{s\in[0,+\infty]} \frac{d}{2}\Bigg(\frac{N_0}{(N_0+{s})^2}+\frac{s^2}{(N_0+s)^{2}}\cdot\\&\argmin\limits_{\underline{\alpha}\in\cA}\frac{\underline{\alpha}^T\left(\operatorname{diag}\left(\frac{d}{N_1}, \dots, \frac{d}{N_K}\right)  
+\Theta^T{J}(\underline{\theta}_0)\Theta\right)\underline{\alpha}}{d}\Bigg),
\end{align}
where 
\begin{align}
\cA=\left\{\underline{\alpha}\Bigg|\sum_{i=1}^{K}\alpha_i=1,  \alpha_i \ge0,i=1,\dots,K\right\}.
\end{align}

This problem requires optimizing the objective function over two variables: a scalar variable $s$ representing the total weighted transfer quantity, and a vector variable $\underline{\alpha}$ representing the proportion of each source domain in $s$. First, we compute the optimal $\underline{\alpha}^*=\left[\alpha_1^*,\dots,\alpha_K^*\right]^T$ under the constraint $\mathcal{A}$ to minimize $t$, which is a $K\times K$ non-negative quadratic programming problem with respect to $\underline{\alpha}$, i.e.,

\begin{align}
\label{eq_definition_optimal_alpha}
\underline{\alpha}^*=\arg\min\limits_{\underline{\alpha}\in\mathcal{A}}\underline{\alpha}^T\frac{\left(\operatorname{diag}\left(\frac{d}{N_1}, \dots, \frac{d}{N_K}\right)
+\Theta^T{J}(\underline{\theta}_0)\Theta\right)}{d}\underline{\alpha}. 
\end{align}
In the quadratic coefficient matrix of this optimization problem, the first component, $\operatorname{diag}\left(\frac{d}{N_1}, \dots, \frac{d}{N_K}\right)$, is diagonal and therefore strictly positive definite. The second component, $\Theta^\top J(\underline{\theta}_0)\Theta$, is positive semi-definite since the Fisher information matrix $J(\underline{\theta}_0)$ is positive definite by the assumptions in Section~\ref{section_Preliminaries}. Consequently, the sum of these two components is positive definite, which guarantees that the optimization problem is convex and admits a unique global optimum.
By this procedure, we obtain $t^*$ and $\alpha^*$ that minimize $t$. Then we get the optimal value of ${s}$, which is always \begin{align}{s^*}=\frac{1}{t^*}.\end{align} Finally, we use optimal ${s^*}$ and ${\alpha}^*_i$ to get the optimal weights by   
\begin{align}\label{eq_definition_optimal_weight} w^*_i=\frac{s^*{\alpha}^*_i}{N_i}. \end{align}

The time complexity of this optimization is dominated by solving a $K \times K$ quadratic programming problem over $\alpha$, which can be solved in $\mathcal{O}(K^3)$ time using a primal-dual interior-point solver.
In most standard transfer learning benchmarks, the number of available source domains is modest (typically $K \leq 10$), rendering the $\mathcal{O}(K^3)$ computational complexity manageable for real-world applications. 
For cases where the number of available source domains is substantially larger, one feasible strategy to preserve scalability is to perform a preliminary clustering or aggregation of domains with high similarity before executing our algorithm.

Moreover, the finding that maximizing the transfer quantity is optimal is non-trivial and theoretically insightful. Conventional wisdom in transfer learning holds that excessive use of source data without controlling domain discrepancy can amplify bias and cause negative transfer \cite{zhang2022survey}. This perspective has motivated many approaches that either down-weight or selectively discard source samples to mitigate distribution mismatch \cite{jain2023data,li2024scalable, zhang2025theoretical}. On the other hand, some methods by default exploit all source data for joint training, often relying on heuristic weighting. Our framework provides a new theoretical perspective by jointly optimizing transfer quantity and source weights. Through asymptotic analysis, we prove that when weights are allowed to adjust, the optimal solution always employs all available source samples. The key insight is that enlarging the effective sample size strictly reduces variance, while the weighting mechanism simultaneously suppresses the bias introduced by heterogeneous domains. 
This dual effect reconciles prior intuitions: discarding data may reduce bias but weakens variance reduction, while our analysis shows that full data utilization with proper weighting achieves a better bias–variance trade-off.
Consequently, our analysis offers a principled justification for exploiting all source data in multi-source transfer learning.


\begin{figure}[htbp]
\centering
\includegraphics[width=0.45\textwidth]{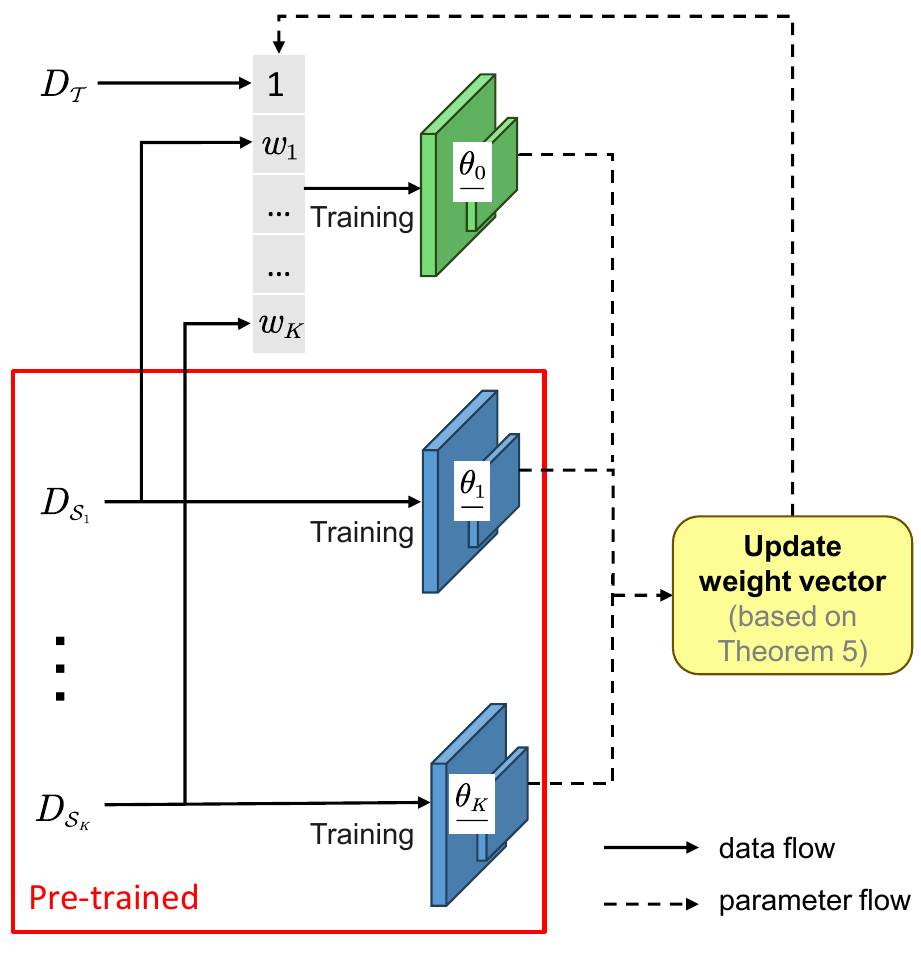}
\caption{\textbf{Overview of the \ourmethod{} training pipeline}: At each iteration, the model parameters are optimized via gradient descent using target data together with weighted samples from each source task. The source weights are subsequently updated based on the current model parameters. This alternating optimization procedure establishes an iterative feedback loop that jointly updates the source weights and the target model.}
\label{overview}
\end{figure}

\section{Practical Algorithms}
\label{section_Practical_Algorithms}
In this section, we introduce practical algorithms derived from our theory for multi-source transfer learning and multi-task learning.
Specifically, we propose practical algorithms of \ourmethod{}
for multi-source transfer learning  
in Algorithm \ref{alg:ours_algorithm} and for multi-task learning in Algorithm \ref{alg:multi_task_alternating}.

Firstly, we focus on Algorithm~\ref{alg:ours_algorithm} to illustrate the proposed approach, whose overall workflow is shown in Fig.~\ref{overview}.
In \ourmethod{}, when computing the optimal source weights based on Theorem \ref{thm:Weighted_multi_source}, we use the parameters of pretrained source models to replace $\underline{\theta}_1,\cdots,\underline{\theta}_K$. Considering that the source model can be trained using sufficient labeled data, it is reasonable to use the learned parameters as a good approximation of the true underlying parameters.
In contrast, the amount of target data in transfer learning is often insufficient, so it is difficult to accurately estimate the true parameter $\underline{\theta}_0$ - the parameter of the target task model - using only the target data. 
Therefore, as shown in lines 5-12 of Algorithm 1, we adopt a \textbf{dynamic weight update strategy}. 
Specifically, in the first epoch, we train $\underline{\theta}_0$ with the source weights initialized to zero, which is equivalent to using only the target data. This $\underline{\theta}_0$ is then used, along with Theorem~\ref{thm:Weighted_multi_source}, to determine the optimal source weight for each source task.
Thereafter, training proceeds in an alternating manner: given the current weights, we update $\underline{\theta}_0$, and given the updated model, we recompute the weights. This process is repeated at each epoch, enabling iterative refinement of both the model and the weights. Such an iterative scheme allows adaptive weight adjustment, resulting in more stable optimization and improved transfer performance.
In particular, the loss function $\ell$ is the negative log-likelihood. Moreover, in line 9, we compute the matrix $J$ using the widely adopted \textbf{empirical Fisher} approach in deep learning \cite{martens2020new,osawa2023pipefisher}.

We further extend the proposed framework to the multi-task 
learning setting, where each task simultaneously serves as both a target and a source for the others, and the objective is to learn task-specific models for all tasks.
Suppose we are given $K$ tasks $\{S_k\}_{k=1}^K$, 
where each task $S_k$ is associated with dataset $D_{S_k}$ 
and parameter $\theta_k$. 
When optimizing task $k$, we treat $S_k$ as the target task 
and the remaining tasks $\{S_{k'}\}_{k' \ne k}$ as auxiliary sources, and we construct its training set by combining all samples from the target task with the weighted samples from the remaining tasks, where the weights $w_{k'}^{(k)}$ are computed by \ourmethod{}.
The resulting problem becomes a joint optimization over all task parameters 
$\{\theta_k\}_{k=1}^K$, leading to an alternating optimization procedure between tasks, 
summarized in Algorithm~\ref{alg:multi_task_alternating}.

\section{Experiments}
\label{section_Experiments}

\subsection{Experimental Settings}
In the experiments, we evaluated the practical algorithms of \ourmethod{} in real-world datasets. In particular, we 
evaluated Algorithm~\ref{alg:ours_algorithm} in the multi-source transfer learning setting and Algorithm~\ref{alg:multi_task_alternating} in the multi-task learning setting. 

\textbf{Benchmark Datasets.} 
The \texttt{Office-Home} dataset \cite{venkateswara2017deep} consists of four domains: Art (\textbf{Ar}, 2,427 samples), Clipart (\textbf{Cl}, 4,365 samples), Product (\textbf{Pr}, 4,439 samples), and Real World (\textbf{Rw}, 4,357 samples). Each domain contains images from the same 65 object categories, resulting in a total of 15,588 samples. The \texttt{DomainNet} dataset \cite{peng2019moment_M3SDA} comprises six distinct domains: Clipart (\textbf{Cl}, 48,841 samples), Infograph (\textbf{In}, 48,466 samples), Painting (\textbf{Pa}, 48,529 samples), Quickdraw (\textbf{Qd}, 48,755 samples), Real (\textbf{Re}, 120,906 samples), and Sketch (\textbf{Sk}, 49,044 samples). Each domain includes images from the same 345 object categories, yielding a total of 364,541 samples. Data examples of the two datasets are shown in Fig.~\ref{fig:dataset}.
We treat the multi-class classification problem on each domain as an individual task. 
\begin{figure}[htbp]
\centering
\includegraphics[width=0.48\textwidth]{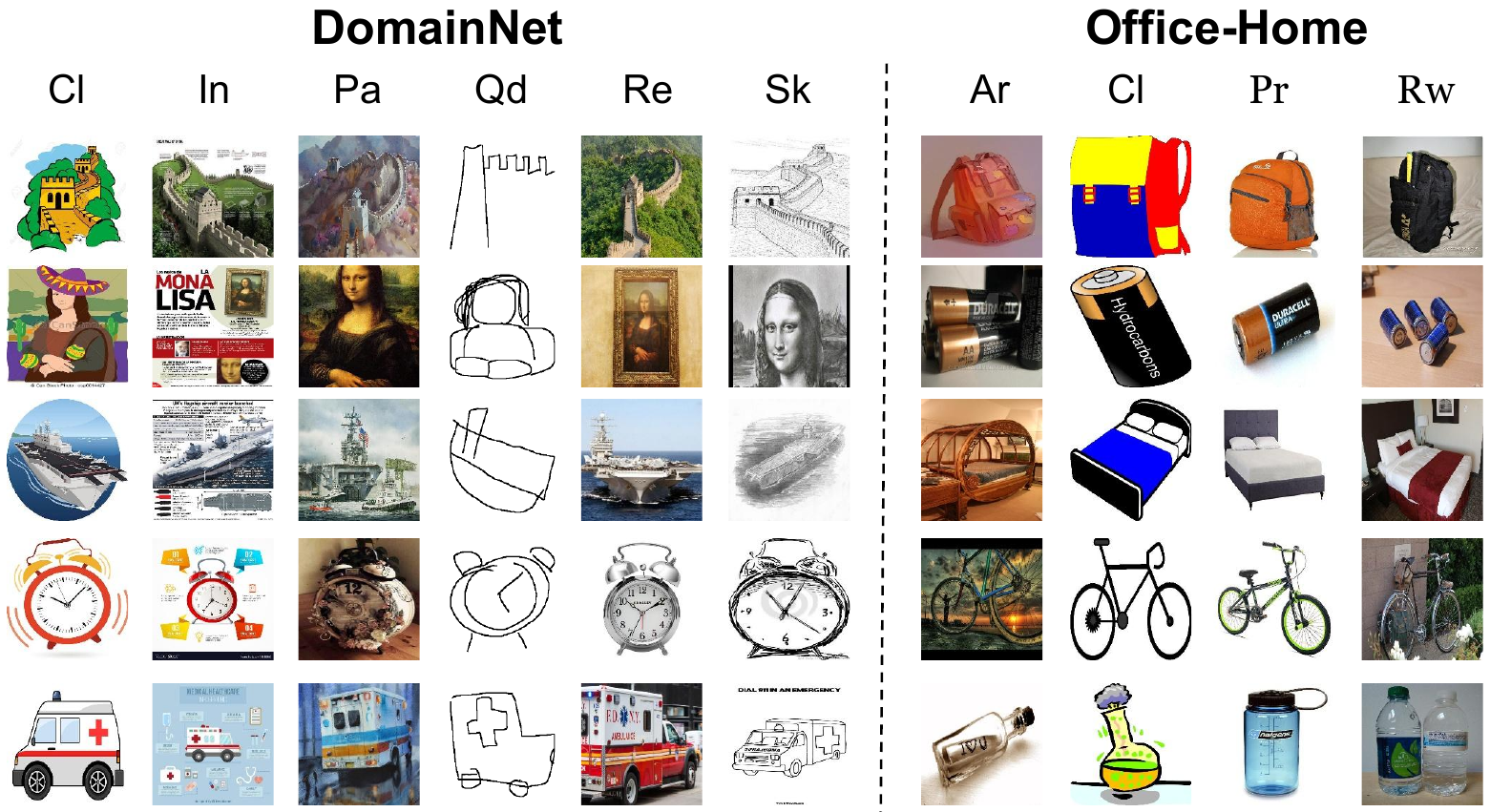}
    \caption{\textbf{Overview of the Datasets}: Examples from the cross-domain datasets \texttt{DomainNet} and \texttt{Office-Home}, where images from different domains exhibit distinct visual styles or are captured using different devices.
}
\label{fig:dataset}
\end{figure}


\begin{algorithm}[H]
   \caption{\ourmethod{} for Multi-Source Transfer Learning}
   \label{alg:ours_algorithm}
\begin{algorithmic}[1]
    \STATE {\bfseries Input:} Target data $D_{\cT}=\{(z_\cT^j, y_\cT^j)\}_{j=1}^{N_0}$, source data $\{D_{S_k}=\{(z_{S_k}^j, y_{S_k}^j)\}_{j=1}^{N_{k}}\}_{k=1}^K$, model type $f_{{\underline{\theta}}}$ and its parameters ${\underline{\theta}}_0$ for target task and $\{{\underline{\theta}}_{k}\}_{k=1}^K$ for source tasks, parameter dimension $d$, source weights $\{w_k\}_{k=1}^K$\\
    \mycomment{$z$ represents the feature and $y$ represents the label}
    \STATE {\bfseries Parameter:} Learning rate $\eta$.  
   
    \STATE {\bfseries Initialize:} randomly initialize ${\underline{\theta}}_0$, use parameters of pretrained source models to initialize $\{{\underline{\theta}}_{k}\}_{k=1}^K$, initialize weights $\{w_k\}_{k=1}^K=0$
   
    \STATE {\bfseries Output:} a well-trained $\underline{\theta}_0$ for target task model $f_{{\underline{\theta}}_0}$

   
    \STATE \textbf{repeat} 
    \STATE \hspace{1em}$\cL_{train} \gets\frac{\sum\limits_{(z^j,y^j)\in{D_\cT}} \ell\left(y^j, f_{{\underline{\theta}}_0}(z^j)\right)}{| D_\cT \bigcup \{D_{S_k}\}_{k=1}^K|}$\\
    
    $\qquad\qquad+\frac{\sum\limits_{k=1}^{K}\sum\limits_{(z^j,y^j)\in{D_{S_k}}} \ell\left(y^j, f_{{\underline{\theta}}_0}(z^j)\right)*w_k}{| D_\cT \bigcup \{D_{S_k}\}_{k=1}^K|}$
    
    \STATE \hspace{1em}${\underline{\theta}}_0 \gets {\underline{\theta}}_0 - \eta \nabla_{{\underline{\theta}}_0} \cL_{train}$
   
    \STATE \hspace{1em}$\Theta\gets\left[{{\underline{\theta}}_1}-{{\underline{\theta}}_0},\dots,{{\underline{\theta}}_K}-{{\underline{\theta}}_0} \right]^T$
    

    \STATE \hspace{1em}${J}({{\underline{\theta}}_0}) \gets \frac{\sum\limits_{(z^j,y^j)\in{D_\cT}}\nabla_{{\underline{\theta}}_0} \ell\left(y^j, f_{{\underline{\theta}}_0}(z^j)\right) \left( \nabla_{\underline{\theta}_0} \, \ell\!\left(y^j, f_{\underline{\theta}_0}(z^j)\right) \right)^{\top}
}{| D_\cT |}$ 

    \STATE \hspace{1em}$(s^*, \underline{\alpha}^*) \gets \argmin\limits_{(s, \underline{\alpha})} \frac{d}{2}\left(\frac{N_0}{(N_0+{s})^2}+\frac{{s}^2}{(N_0+{s})^2}t\right)$ 
    \STATE \hspace{1em} $w^*_k \gets\frac{s^*{\alpha}^*_k}{N_k},k=1,\dots,K$
    \STATE \textbf{until} ${\underline{\theta}}_{0}$ converges;
\end{algorithmic}
\end{algorithm}

\begin{algorithm}[H]
   \caption{\ourmethod{} for Multi-Task Learning}
   \label{alg:multi_task_alternating}
\begin{algorithmic}[1]
    \STATE {\bfseries Input:} Datasets $\{D_{S_k}=\{(z_{S_k}^j, y_{S_k}^j)\}_{j=1}^{N_k}\}_{k=1}^K$, model type $f_{\underline{\theta}}$, parameters $\{\underline{\theta}_k\}_{k=1}^K$, parameter dimension $d$, weights $\{w_{k'}^{(k)}\}_{k,k'=1,\;k'\neq k}^K$
    \STATE {\bfseries Parameter:} Learning rate $\eta$
    \STATE {\bfseries Initialize:} Randomly initialize $\{\underline{\theta}_k\}_{k=1}^K$, set weights $w_{k'}^{(k)} = 0$ for all $k'\neq k$
    \STATE {\bfseries Output:} well-trained parameters $\{\underline{\theta}_k\}_{k=1}^K$
    \REPEAT
        \FOR{each $k = 1$ to $K$}
            \STATE \quad$\cL_{train}^{(k)} \gets\frac{
           \sum\limits_{(z^j,y^j)\in{D_{S_k}}} \ell\left(y^j, f_{\underline{\theta}_k}(z^j)\right)
            }{| \{D_{S_k}\}_{k=1}^K|}$\\
           \qquad\qquad$+\frac{ \sum\limits_{k' \ne k} \sum\limits_{(z^j,y^j)\in{D_{S_{k'}}}} \ell\left(y^j, f_{\underline{\theta}_k}(z^j)\right)\cdot w_{k'}^{(k)}}{|  \{D_{S_k}\}_{k=1}^K |}$
            
            \STATE \quad$\underline{\theta}_k \gets \underline{\theta}_k - \eta \nabla_{\underline{\theta}_k} \cL_{train}^{(k)}$
            \STATE \quad$\Theta^{(k)} \gets \bigg[\underline{\theta}_1 - \underline{\theta}_k, \dots, \underline{\theta}_{k-1} - \underline{\theta}_k,$\\\qquad\qquad$
            \underline{\theta}_{k+1} - \underline{\theta}_k, \dots, \underline{\theta}_K - \underline{\theta}_k \bigg]$
            \STATE \quad$J^{(k)} \gets \frac{\sum\limits_{\substack{(z^j,y^j) \\ \in D_{S_k}}}\nabla_{\underline{\theta}_k} \ell\left(y^j, f_{\underline{\theta}_k}(z^j)\right) \left( \nabla_{\underline{\theta}_k} \, \ell\!\left(y^j, f_{\underline{\theta}_k}(z^j)\right) \right)^{\top}  
            }{| D_{S_k} |}$
            \STATE \quad$(s^*_k, \underline{\alpha}^{*(k)}) \gets \argmin\limits_{(s, \underline{\alpha})} \frac{d}{2}\left(\frac{N_k}{(N_k+s)^2} + \frac{s^2}{(N_k+s)^2}t\right)$
            \STATE \quad$w_{k'}^{(k)} \gets \frac{s^*_k \alpha_{k'}^{*(k)}}{N_{k'}}, \quad \forall k' \ne k$
        \ENDFOR
    \UNTIL{all $\underline{\theta}_k$ converge}
\end{algorithmic}
\end{algorithm}


In the multi-source transfer learning setting, our goal is to transfer useful knowledge from multiple source domains to a specific target domain.   
In the experiments under this setting, our evaluation is performed in the \textbf{supervised-10-shot} learning setting, where only 10 labeled samples per class are available in the target domain, while all available samples from the source domains in the dataset are utilized.

In the multi-task learning setting, each task simultaneously serves as both a target task to be learned and a source task from which knowledge is transferred to other tasks. In the experiments under this setting, all available data from every domain in the dataset are fully utilized.


\textbf{Implementation Details.}
Experiments are conducted on  NVIDIA A800 GPU.
To ensure fair comparison with prior work, different configurations are adopted for two scenarios.
For the multi-source transfer learning setting, we adopt ViT-Small \cite{rw2019timm_vits} pre-trained on ImageNet-21k \cite{deng2009imagenet} as the backbone and Adam optimizer \cite{kingma2014adam} with a learning rate of $10^{-5}$. We allocate 20\% of the dataset as the test set and report the best accuracy within 5 epochs of early stopping.
For the multi-task learning setting, we use ResNet-18 \cite{he2016deep_resnet} pre-trained on ImageNet-1k \cite{krizhevsky2012imagenet} as the backbone and Adam optimizer with a learning rate of $10^{-4}$. The dataset is randomly divided into 60\% for training, 20\% for validation, and 20\% for testing.

\begin{table*}[!htbp]
\centering
\caption{\textbf{Multi-Source Transfer Performance on DomainNet and Office-Home.} The arrows indicate transferring from the rest tasks. The highest/second-highest accuracy is marked in Bold/Underscore form respectively. }
\resizebox{\textwidth}{!}{
\begin{tabular}{lc c ccccccc c ccccc}
\toprule
\multirow{2.5}{*}{\textbf{Method}} & \multirow{2.5}{*}{\textbf{Backbone}} && \multicolumn{7}{c}{\textbf{DomainNet}} && \multicolumn{5}{c}{\textbf{Office-Home}}\\ 
\cmidrule(lr){4-10} \cmidrule(lr){12-16}
 &&&$\to$C & $\to$I & $\to$P & $\to$Q & $\to$R & $\to$S & Avg&& $\to$Ar & $\to$Cl & $\to$Pr & $\to$Rw & Avg\\ 

\midrule
\multicolumn{14}{l}{\textbf{\textrm{Unsupervised-all-shots}}} \\
MSFDA\cite{shen2023balancingMSFDA} & ResNet50 && 66.5 & 21.6 &56.7 &20.4 &70.5 &54.4 &48.4  && 75.6 &62.8 &84.8 &85.3 &77.1  \\
DATE\cite{han2023discriminability_DATE}& ResNet50 && - & - & - & - & - & -   & - && 75.2 & 60.9 &  \underline{85.2} & 84.0 & 76.3 \\
M3SDA\cite{peng2019moment_M3SDA}& ResNet101 &&57.2 & 24.2 & 51.6 & 5.2 & 61.6& 49.6& 41.5  && - & - & - & - & - \\
\midrule
\multicolumn{14}{l}{\textbf{\textrm{Supervised-10-shots}}} \\

\multicolumn{14}{l}{\textbf{\textit{Source-Ablation Methods:}}} \\
Target-Only& ViT-S && 14.2 & 3.3  & 23.2  & 7.2  & 41.4   & 10.6  & 16.7 && 40.0 & 33.3 & 54.9 & 52.6 & 45.2 \\ 

Single-Source-Avg& ViT-S && 50.4 & 22.1 & 44.9  & 24.7  & 58.8 & 42.5  & 40.6 && 65.2  & 53.3 & 74.4 & 72.7 & 66.4 \\
Single-Source-Best& ViT-S && 60.2 & 28.0 & 55.4  & 28.4 & 66.0 & 49.7 & 48.0 && 72.9  & 60.9 & 80.7 & 74.8 & 72.3 \\ 

\allsource{}& ViT-S && 71.7 & 32.4 & 60.0 & 31.4 & 71.7 & 58.5 & 54.3 && 77.0 & 62.3 & 84.9 & 84.5 & 77.2\\

\multicolumn{14}{l}{\textit{\textbf{Model-Weighting Based Few-Shot Methods:}}}\\
MCW\cite{lee2019learning_MCW}& ViT-S && 54.9 & 21.0 & 53.6 & 20.4 & 70.8 & 42.4 & 43.9 && 68.9 & 48.0 & 77.4 & \underline{86.0} & 70.1 \\
H-ensemble\cite{wu2024h_Hensemble}& ViT-S && 53.4 & 21.3 & 54.4 & 19.0 & 70.4 & 44.0 & 43.8 && 71.8 & 47.5 & 77.6 & 79.1 & 69.0  \\



\multicolumn{14}{l}{\textit{\textbf{Sample-Based Few-Shot Methods:}}}\\
WADN\cite{shui2021aggregating_WADN}& ViT-S && 68.0 & 29.7 & 59.1 & 16.8 & \underline{74.2} & 55.1 & 50.5 && 60.3 & 39.7 & 66.2 & 68.7 & 58.7 \\
MADA\cite{zhang2024revisiting_MADA}& ViT-S && 51.0 & 12.8 & 60.3 & 15.0 & \textbf{81.4} & 22.7 & 40.5 && \underline{78.4} & 58.3 & 82.3 & 85.2 & 76.1 \\
OTQMS & ViT-S && \underline{72.8} & \underline{33.8} & \underline{61.2} & \underline{33.8} & 73.2 & \underline{59.8} & \underline{55.8} && 78.1 & \underline{64.5} & \underline{85.2} & 84.9  & \underline{78.2} \\

\ourmethod{} & ViT-S && \textbf{74.5} & \textbf{35.2} & \textbf{61.8} & \textbf{36.8}  & 73.3 & \textbf{60.9} & \textbf{57.1} && \textbf{78.7} & \textbf{65.1} & \textbf{86.9} & \textbf{87.6} & \textbf{79.6} \\
\bottomrule
\end{tabular}
}
\label{tab:major}
\end{table*}

\begin{table*}[!htbp]
\centering
\caption{\textbf{Multi-Task Learning Performance on DomainNet and Office-Home.} 
Each column indicates the target domain used for evaluation. 
The highest and second-highest accuracy are marked in \textbf{bold} and \underline{underline}, respectively.}
\resizebox{\textwidth}{!}{
\begin{tabular}{l ccccccc c ccccc}
\toprule
\multirow{2.5}{*}{\textbf{Method}} 
& \multicolumn{7}{c}{\textbf{DomainNet}} 
& & \multicolumn{5}{c}{\textbf{Office-Home}} \\
\cmidrule(lr){2-8} \cmidrule(lr){10-14}
& C & I & P & Q & R & S & Avg && Ar & Cl & Pr & Rw & Avg \\
\midrule
EW 
& 67.3 & \textbf{22.7} & \underline{56.1} & 35.1 & 69.3 & 55.9 & 51.1
&& 63.0 & 76.5 & 88.5 & 77.7 & 76.4 \\
MGDA-UB\cite{sener2018multi} 
& \textbf{69.0} & 22.4 & \underline{56.1} & 31.7 & 69.7 & 55.9 & 50.8
&& 64.3 & 75.3 & \underline{89.7} & 79.3 & 77.2 \\
GradNorm\cite{chen2018gradnorm} 
& 67.1 & \underline{22.5} & \underline{56.1} & 35.7 & 69.3 & 55.8 & 51.1
&& 65.5 & 75.3 & 88.7 & 78.9 & 77.1 \\
PCGrad\cite{yu2020gradient} 
& 67.8 & \textbf{22.7} & 56.0 & 34.7 & 69.2 & 56.1 & 51.1
&& 63.9 & 76.0 & 88.9 & 78.3 & 76.8 \\
CAGrad\cite{liu2021conflict} 
& \underline{68.4} & \textbf{22.7} & 55.7 & 32.1 & \underline{69.8} & 55.6 & 50.7
&& 63.8 & 75.9 & 89.1 & 78.3 & 76.8 \\
RGW\cite{lin2022reasonable_RGW} 
& 67.1 & 21.9 & 55.2 & 33.9 & 68.6 & 55.6 & 50.4
&& 65.1 & \underline{78.7} & 88.7 & 79.9 & 78.1 \\
MoCo\cite{fernando2022mitigating} 
& 60.3 & 19.0 & 46.6 & \textbf{39.9} & 57.9 & 50.7 & 45.7
&& 64.1 & \textbf{79.8} & 89.6 & 79.6 & 78.3 \\
MoDo\cite{chen2023three} 
& 67.9 & 22.0 & 55.4 & 36.0 & 69.4 & \underline{56.3} & \underline{51.2}
&& \underline{66.2} & 78.2 & \textbf{89.8} & \underline{80.3} & \underline{78.7} \\
\ourmethod{} 
& 68.2 & \textbf{22.7} & \textbf{56.5} & \underline{37.2} & \textbf{70.0} & \textbf{57.2} & \textbf{51.9}
&& \textbf{69.3} & 77.3 & 89.5 & \textbf{80.4} & \textbf{79.1} \\
\bottomrule
\end{tabular}
}
\label{tab:multitask_combined}
\end{table*}

\begin{table*}[!h]
\centering
\caption{Training time comparison on the DomainNet dataset across different target domains.
}
\resizebox{\textwidth}{!}{
\begin{tabular}{l c ccccccc}
\toprule
\multirow{2.5}{*}{\textbf{Method}} & \multirow{2.5}{*}{\textbf{Backbone}} & \multicolumn{7}{c}{\textbf{DomainNet}}\\ 
\cmidrule(lr){3-9} 
&&$\to$C & $\to$I & $\to$P & $\to$Q & $\to$R & $\to$S & Avg\\
\midrule
MADA      & ViT-S
& 15:57:00 & 20:01:00 & 17:23:30 & 24:07:30 & 11:15:00 & 14:03:30 & 17:07:55 \\
\allsource{} & ViT-S
& 10:34:30 & 10:40:00 & 10:11:00 & 08:35:00 & 08:37:30 & 10:23:00 & 09:50:10 \\
\ourmethod{}      & ViT-S 
& 10:40:54 & 10:50:00 & 10:19:50 & 08:48:18 & 08:52:36 & 10:31:38 & 10:00:32 \\
\bottomrule
\end{tabular}
}
\label{tab:time}
\end{table*}

\begin{table}[!h]
\centering
\caption{Comparison between static and dynamic weighting strategies under the 10-shot setting on the Office-Home dataset.
}
\resizebox{0.48\textwidth}{!}{
\begin{tabular}{l c ccccc}
\toprule
\multirow{2.5}{*}{\textbf{Method}} & \multirow{2.5}{*}{\textbf{Backbone}} & \multicolumn{5}{c}{\textbf{Office-Home}}\\ 
\cmidrule(lr){3-7} 
&& $\to$Ar & $\to$Cl & $\to$Pr & $\to$Rw & Avg\\ \midrule
\multicolumn{7}{l}{\textit{\textbf{Supervised-10-shots:}}} \\
Static & ViT-S & 74.8 & 63.1 & 81.6 & 83.2 & 75.7 \\
Dynamic & ViT-S & \textbf{78.7} & \textbf{65.1} & \textbf{86.9} & \textbf{87.6}  & \textbf{79.6} \\
\bottomrule
\end{tabular}
}
\label{tab:dynamic}
\end{table}

\begin{table}[!h]
\centering
\caption{Multi-Source Transfer with LoRA on Office-Home. We apply LoRA on ViT-B backbone for PEFT.}
\resizebox{0.48\textwidth}{!}{
\begin{tabular}{l c ccccc}
\toprule
\multirow{2.5}{*}{\textbf{Method}} & \multirow{2.5}{*}{\textbf{Backbone}} & \multicolumn{5}{c}{\textbf{Office-Home}}\\ 
\cmidrule(lr){3-7} 
&& $\to$Ar & $\to$Cl & $\to$Pr & $\to$Rw & Avg\\ \midrule
\multicolumn{7}{l}{\textit{\textbf{Supervised-10-shots Source-Ablation:}}} \\
Target-Only & ViT-B & 59.8 & 42.2 & 69.5 & 72.0 & 60.9 \\ 
Single-Source-avg & ViT-B & 72.2  & 59.9 & 82.6 & 81.0 & 73.9 \\
Single-Source-best & ViT-B & 74.4  & 61.8 & 84.9 & 81.9 & 75.8 \\ 
\allsource{} & ViT-B & \underline{81.1} & 66.0 & 88.0 & 89.2 & 81.1\\
OTQMS & ViT-B & \textbf{81.5} & \textbf{68.0} & \underline{89.2} & \textbf{90.3}  & \underline{82.3} \\
\ourmethod{} & ViT-B & \textbf{81.5} & \underline{67.6} & \textbf{90.1} & \underline{90.2}  & \textbf{82.4} \\
\bottomrule
\end{tabular}
}
\label{tab:osw_lora}
\vspace{-2em}  
\end{table}

\subsection{Experimental Design and Model Adaptation}
\label{appendix:Experimental_Design_and_Model_Adaptation}

\textbf{Baselines.}
We compare our method with several SOTA methods in both settings.

For the multi-source transfer learning setting, we follow a unified evaluation setup for all methods. The baselines include:
1) Source Ablation Studies: Target-Only, Single-Source-Avg/Single-Source-Best (average/best single-source transfer results), \allsource{} (transfer results using all available source data without weighting).  
2) Model-Weighting Based Few-Shot Methods: H-ensemble \cite{wu2024h_Hensemble}, MCW \cite{lee2019learning_MCW}.
3) Sample-Based Few-Shot Methods: MADA \cite{zhang2024revisiting_MADA},  WADN~\cite{shui2021aggregating_WADN}, OTQMS~\cite{zhang2025theoretical}.
Furthermore, we report the results of several unsupervised methods for reference, including MSFDA \cite{shen2023balancingMSFDA}, DATE \cite{han2023discriminability_DATE}, M3SDA \cite{peng2019moment_M3SDA}. For these unsupervised methods, we adopt a different experimental setting: instead of following the supervised-10-shot setting, we utilize all available unlabeled target data, corresponding to an unsupervised-all-shot setting.


For the multi-task learning setting, we follow the same experimental setup as in~\cite{chen2023three} and compare with the baselines reported in their work. The baselines include:
Equal Weighting (\textbf{EW}),  
MGDA-UB~\cite{sener2018multi},  
GradNorm~\cite{chen2018gradnorm},  
PCGrad~\cite{yu2020gradient},  
CAGrad~\cite{liu2021conflict},  
RGW~\cite{lin2022reasonable_RGW},  
MoCo~\cite{fernando2022mitigating},  
and MoDo~\cite{chen2023three}.

\subsection{Few-Shot Multi-Source Transfer Performance
}

We evaluated our method, \ourmethod{}, against the baseline methods using the \texttt{Office-Home} and the \texttt{DomainNet} datasets. 
The quantitative results for the multi-source transfer learning setting are summarized in Table~\ref{tab:major}. 
We make the following observations:

\textbf{Overall Performance.}
In general, compared to the other baselines, \ourmethod{} achieves the best performance. 
Specifically, in the multi-source transfer learning setting, \ourmethod{} outperforms the state-of-the-art (\text{OTQMS}) by an average of $1.3\%$ on \texttt{DomainNet} and $1.4\%$ on \texttt{Office-Home}.



\textbf{Sample-Based vs. Model-Based.}
In the multi-source transfer learning setting, methods based on sample consistently exhibit stronger performance than those relying on model-level aggregation.
Specifically, approaches that directly optimize a mixture of source and target samples—such as WADN, MADA, OTQMS, and \ourmethod{}—tend to outperform model-based techniques like H-ensemble and MCW, which combine predictions or parameters from multiple pretrained models.
A plausible explanation is that sample-based strategies allow the target model to interact directly with source data during training, enabling more flexible adaptation to target-specific distributions, whereas model-based methods are constrained by fixed source representations.

\textbf{Unified Optimization Outperforms Isolated Optimization.}
Among sample-based approaches, \ourmethod{} further improves upon existing baselines by jointly optimizing both the transfer quantities and the corresponding weights, while also deriving transfer weights that are explicitly dependent on the transfer quantities. In contrast, OTQMS focuses solely on optimizing transfer quantities, whereas WADN primarily adjusts weighting coefficients.


\textbf{Effectiveness of Limited Supervision.}
We further compare supervised and unsupervised multi-source transfer learning approaches under otherwise identical settings.
As shown in Table~\ref{tab:major}, even when unsupervised methods such as MSFDA and M3SDA leverage all available target samples—up to $1.3\times10^5$ instances in the Real domain of DomainNet—their performance remains inferior to supervised methods that utilize only a small number of labeled target samples (3450 in total).
This comparison suggests that a modest amount of target supervision can provide substantially more informative guidance than large quantities of unlabeled data, underscoring the practical value of few-shot supervision in multi-source transfer learning.

\textbf{Gains over Strong Baselines in Challenging Domains.}  
Beyond average accuracy improvements, \ourmethod{} consistently outperforms strong baselines in the most challenging domains.  For example, in the \texttt{DomainNet} experiments, \ourmethod{} improves over the previous best results in the Quickdraw target domain by 3.0\% absolute accuracy (36.8\% vs.\ 33.8\%) and in the Infograph target domain by 1.4\% (35.2\% vs.\ 33.8\%). 
These gains in visually
divergent domains demonstrate that \ourmethod{}'s sample-weighting strategy is particularly effective in mitigating domain shift. These results highlight the robustness of \ourmethod{} under severe domain shifts, where knowledge transfer is particularly challenging.


\subsection{Multi-Task Learning Performance}

The results for the multi-task learning setting are summarized in Table~\ref{tab:multitask_combined}. We make the following observations.

\textbf{Overall Performance Superiority.}  
Across both benchmarks, \ourmethod{} achieves the best overall average performance. 
On \texttt{DomainNet}, it obtains the highest mean accuracy of 51.9\%, improving upon the strongest baseline MoDo (51.2\%) by 0.7 percentage points.
Similarly, on \texttt{Office-Home}, \ourmethod{} achieves the best average accuracy of 79.1\%, surpassing MoDo (78.7\%) and MoCo (78.3\%) by 0.4 and 0.8 percentage points, respectively.

\textbf{Consistent and Balanced Multi-Task Performance.}
Across the ten target domains from the two benchmarks, \ourmethod{} achieves performance improvements on eight domains compared with the strongest competing methods, demonstrating broad effectiveness across heterogeneous tasks.
In particular, the gains are consistent on DomainNet, where \ourmethod{} outperforms all baselines across all six target domains (C, I, P, Q, R, and S), indicating that the improvements are not driven by a small subset of tasks.
On Office-Home, while the improvements are more selective, \ourmethod{} still attains the best overall average accuracy and achieves clear gains on two of the four target domains (Ar and Rw), while remaining competitive on the others.
Importantly, improvements on more challenging domains do not come at the expense of degraded performance on easier ones, suggesting effective mitigation of negative task interference.
Overall, these results highlight the ability of the proposed method to balance shared representation learning with task-specific optimization in multi-task settings.

\textbf{Extension of Weighting Theory to Multi-Task Learning.}
Although the proposed weighting theory is derived under a multi-source transfer learning formulation, our experiments demonstrate that it generalizes effectively to the multi-task learning setting. This observation validates the effectiveness and generality of the weights introduced by \ourmethod{}. Specifically, \ourmethod{} iteratively optimizes all model parameters and adaptively regulates cross-task influence through weights derived from our theoretical analysis and explicitly related to the transfer quantities.

\subsection{Robustness under Different Shot Settings}
We further investigate the robustness of \ourmethod{} with respect to the availability of target samples by gradually increasing the number of shots per class from 5 to 100.
Fig.~\ref{fig:shots} reports the results on the Office-Home dataset across all four target domains (Ar, Cl, Pr, and Rw), in comparison with Target-Only and AllSources $\cup$ Target.
Across all domains, \ourmethod{} maintains consistently competitive accuracy throughout the entire range of shot settings, exhibiting neither performance collapse in low-shot regimes nor degradation as more target samples are introduced.
In contrast to methods that rely heavily on either source aggregation or target-only supervision, \ourmethod{} exhibits stable and monotonic performance trends as the number of shots increases.
These observations indicate that the proposed approach adapts smoothly to different data regimes and remains robust under varying levels of target supervision.

\begin{figure}[!h]
    \centering
        \includegraphics[width=\linewidth]{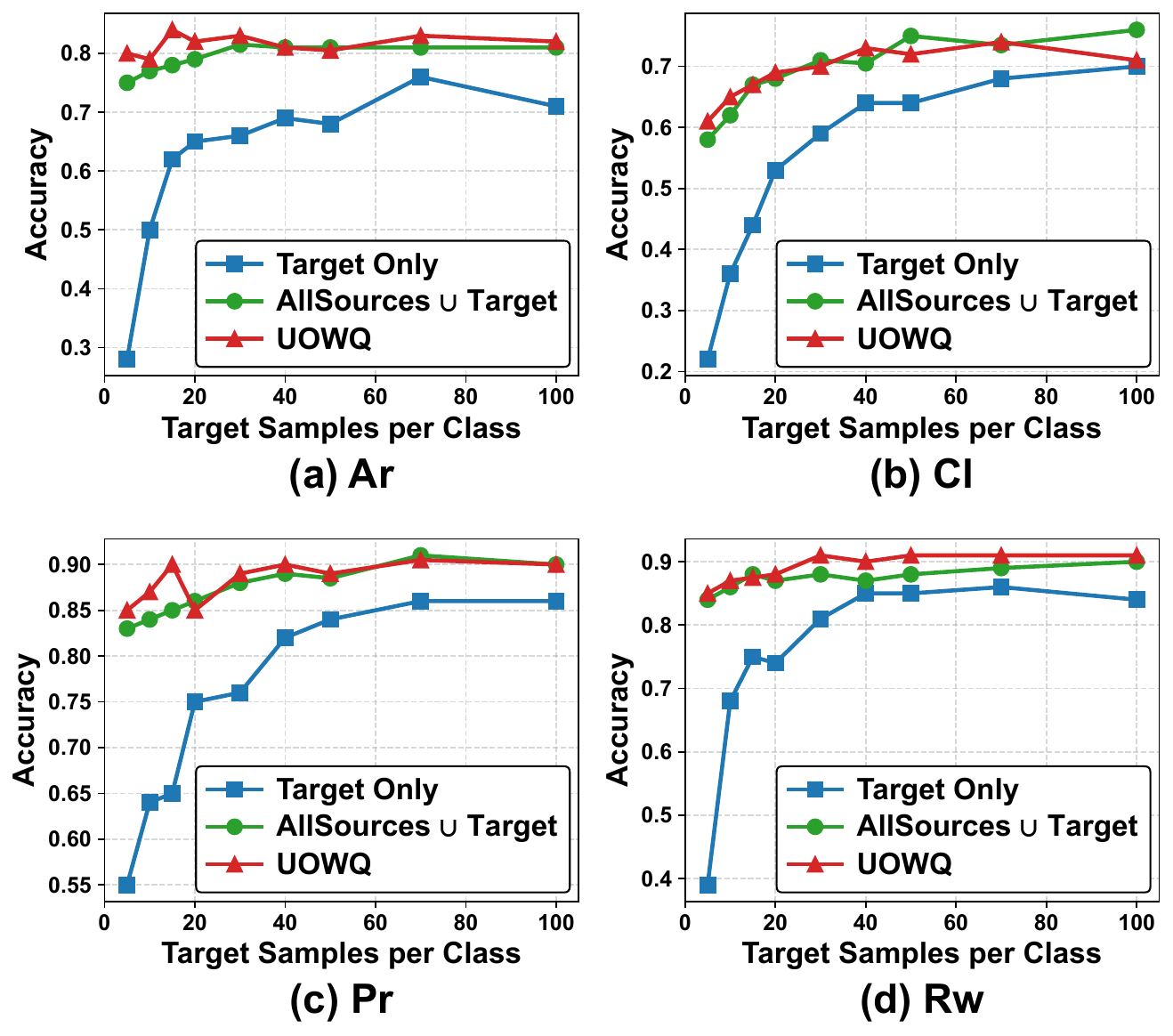}
    \caption{Performance comparison on the Office-Home dataset under varying target shot settings.}
    \label{fig:shots}
\end{figure}

\begin{figure*}[!t]
\centering

\begin{minipage}[t]{0.49\textwidth}
    \centering
    \begin{minipage}{0.48\linewidth}
        \centering
        \includegraphics[width=\linewidth]{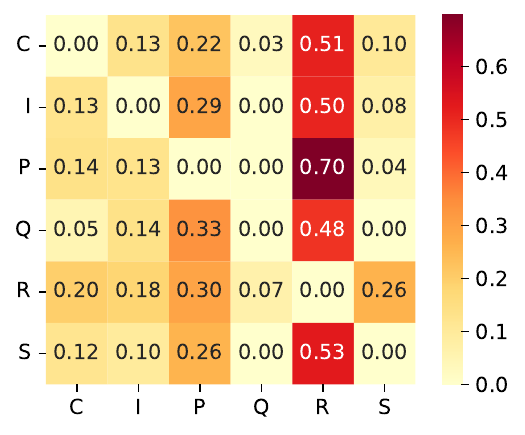}
        {\small (a) DomainNet: $\alpha$ weights}
    \end{minipage}\hfill
    \begin{minipage}{0.48\linewidth}
        \centering
        \includegraphics[width=\linewidth]{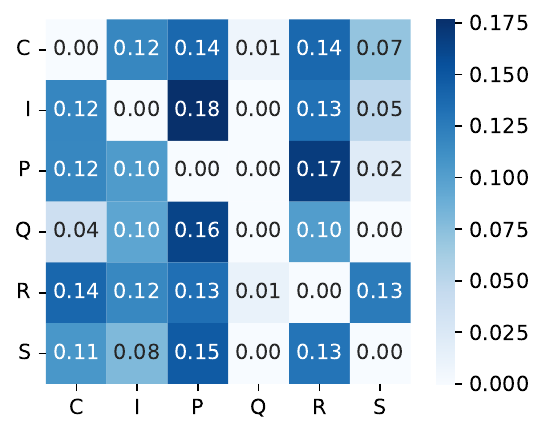}
        {\small (b) DomainNet: $w$ weights}
    \end{minipage}

    \vspace{3mm}

    \begin{minipage}{0.48\linewidth}
        \centering
        \includegraphics[width=\linewidth]{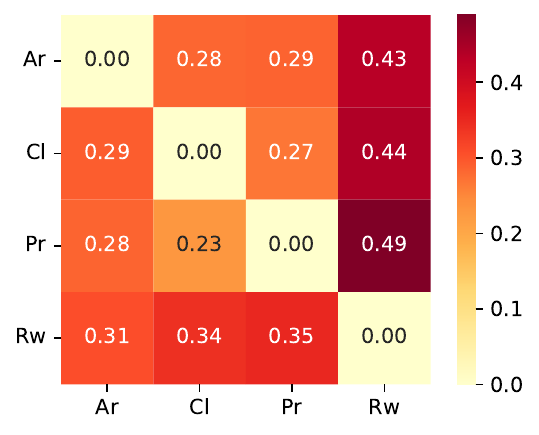}
        {\small (c) Office-Home: $\alpha$ weights}
    \end{minipage}\hfill
    \begin{minipage}{0.48\linewidth}
        \centering
        \includegraphics[width=\linewidth]{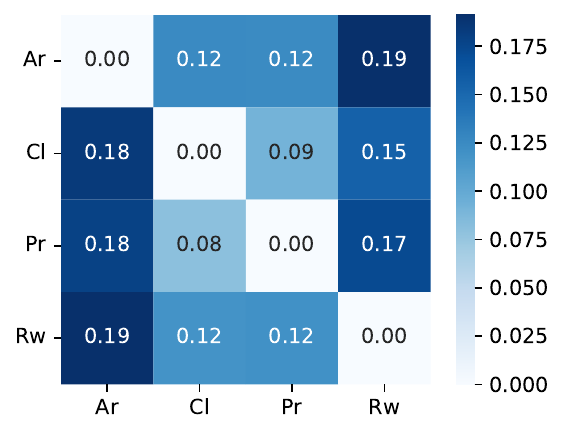}
        {\small (d) Office-Home: $w$ weights}
    \end{minipage}

    \captionof{figure}{Domain preference visualization on DomainNet and Office-Home.}
    \label{fig:heatmap_domain}
\end{minipage}
\hfill
\begin{minipage}[t]{0.49\textwidth}
    \centering
    \begin{minipage}{0.48\linewidth}
        \centering
        \includegraphics[width=\linewidth]{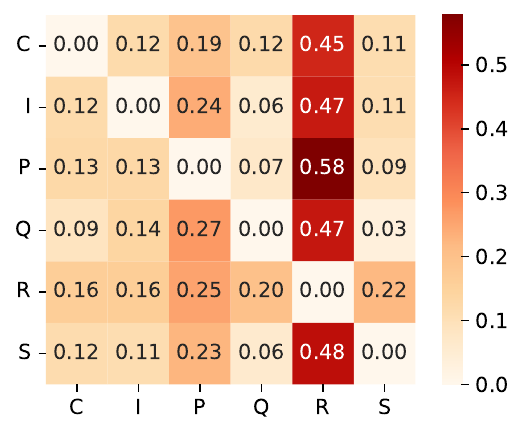}
        {\small (a) DomainNet: $\alpha$ weights}
    \end{minipage}\hfill
    \begin{minipage}{0.48\linewidth}
        \centering
        \includegraphics[width=\linewidth]{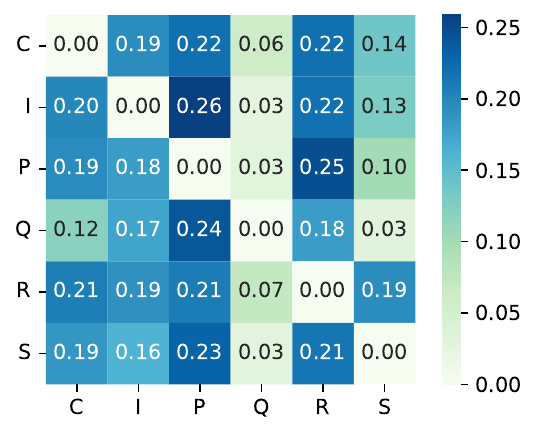}
        {\small (b) DomainNet: $w$ weights}
    \end{minipage}

    \vspace{3mm}

    \begin{minipage}{0.48\linewidth}
        \centering
        \includegraphics[width=\linewidth]{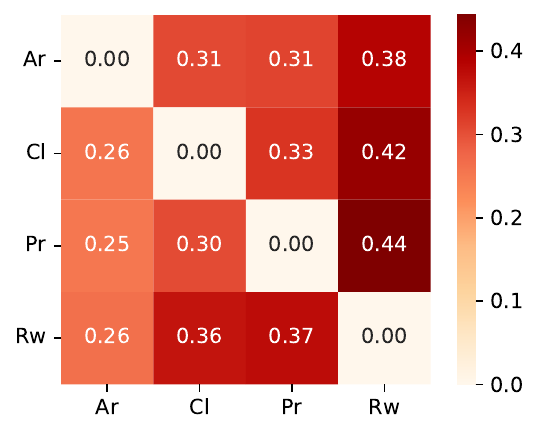}
        {\small (c) Office-Home: $\alpha$ weights}
    \end{minipage}\hfill
    \begin{minipage}{0.48\linewidth}
        \centering
        \includegraphics[width=\linewidth]{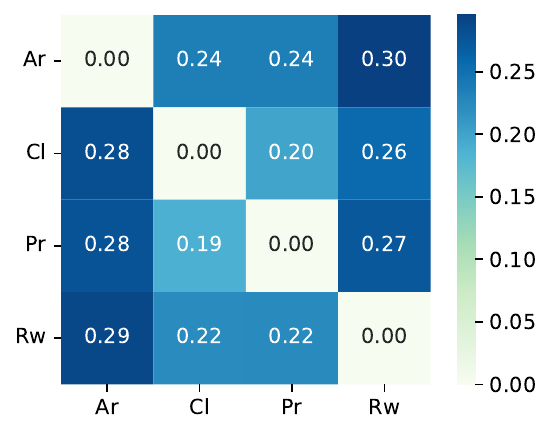}
        {\small (d) Office-Home: $w$ weights}
    \end{minipage}

    \captionof{figure}{Domain preference visualization on DomainNet and Office-Home trained with half of the source dataset.}
    \label{fig:heatmap_domain_2}
\end{minipage}
\end{figure*}

\subsection{Weight Visualization}
To analyze the domain preference learned by \ourmethod{}, we visualize the optimal coefficients $\alpha^*$ (as defined in~\eqref{eq_definition_optimal_alpha})
together with the base weights $w^*$(as defined in ~\eqref{eq_definition_optimal_weight}) in Fig.~\ref{fig:heatmap_domain},
where each row corresponds to a target domain and each column corresponds to a source domain.
On DomainNet, when Clipart and Painting are treated as target domains, the Real domain receives the largest $\alpha^*$ and $w^*$ coefficients. In addition, the Quickdraw domain receives relatively small weights for any target domain. These observed domain correlations are consistent with previous findings reported in~\cite{peng2019moment_M3SDA}. In Office-Home, the Real-World domain obtains the highest $\alpha^*$ for Art, Clipart, and Product targets, which is consistent with previous empirical observations in~\cite{liang2020we}. Overall, these results suggest that the quadratic optimization naturally assigns larger transfer coefficients to empirically more transferable source domains.   

Moreover, Fig.~\ref{fig:heatmap_domain_2} visualizes the learned domain preferences when only half of the source samples are used. Compared with Fig.~\ref{fig:heatmap_domain}, the relative patterns of the normalized coefficients $\alpha$ remain largely unchanged, indicating that the inferred domain similarity is stable. In contrast, the absolute source weights $w$ become noticeably larger across most source–target pairs. This behavior is consistent with our theory: when the source sample size $N_i$ is reduced, the source effect weakens, and the optimal solution compensates by assigning larger source weights to maintain a favorable bias–variance trade-off. 
From the \eqref{eq_definition_optimal_weight}, since $w^*_i=\frac{s^*{\alpha}^*_i}{N_i}$ explicitly depends on $N_i$, decreasing $N_i$ naturally leads to increased $w_i$.
Overall, the comparison between Fig.~\ref{fig:heatmap_domain} and Fig.~\ref{fig:heatmap_domain_2} shows that UOWQ adaptively adjusts source weights according to transfer quantities rather than relying on fixed weighting schemes.





\subsection{Computational Efficiency.}
We further compare the computational cost of different methods on the DomainNet benchmark, and report the wall-clock training time in Table~\ref{tab:time}.
Compared with MADA, which requires substantially longer training time across all target domains, both \allsource{} and \ourmethod{} exhibit significantly improved efficiency.
Notably, \ourmethod{} achieves comparable training time to \allsource{}, with only a marginal overhead introduced by the additional optimization procedure.
Across all domains, the average training time of \ourmethod{} (10:00:32) remains close to that of \allsource{} (09:50:10), and is substantially lower than MADA (17:07:55).
These results indicate that the proposed method introduces minimal computational overhead while providing consistent performance gains, demonstrating a favorable trade-off between effectiveness and efficiency.

\subsection{Static vs. Dynamic Weighting.}
Table~\ref{tab:dynamic} further examines the effect of dynamically updating source weights during training.
Compared with the static variant, which fixes the weights computed at initialization, the dynamic strategy consistently achieves higher accuracy across all Office-Home target domains.
The performance gap is particularly pronounced on more challenging domains such as Clipart and Product, indicating that continuously adapting the weights to the evolving target model is beneficial.
These results suggest that dynamic re-optimization enables the transfer strategy to better track changes in the target representation, thereby suppressing suboptimal or outdated source contributions and improving overall robustness under limited target supervision.

\subsection{Compatibility to LoRA-Based Training}

We further examine the compatibility of \ourmethod{} with LoRA. 
Specifically, we apply \ourmethod{} to multi-source transfer on the \textbf{Office-Home} dataset using the ViT-B backbone. 
Each domain is treated as a distinct source task, and LoRA modules with rank 8 are inserted into the projection layers of transformer blocks.  
As summarized in Table~\ref{tab:osw_lora}, \ourmethod{} consistently improves accuracy over baselines, confirming that the proposed theoretical principles remain effective under LoRA-based adaptation.


\section{Conclusion}

In this work, we propose \ourmethod{}, a theoretical framework for the unified optimization of source weights and transfer quantities in multi-source transfer learning.
By formulating the training process as a parameter estimation problem and conducting an asymptotic analysis of a K-L generalization error measure, we derive a principled method for computing the optimal source weights and transfer quantities. 
A notable theoretical insight is that, when the source weights are jointly optimized, the optimal strategy is to use all available source samples. This is because enlarging the transfer quantities reduces estimation variance, while adaptive weighting suppresses the bias caused by heterogeneous domains.
Building on these theoretical results, we further develop practical algorithms that assign source weights and update them dynamically during training for both multi-source transfer learning and multi-task learning scenarios. 
Extensive experiments on real-world datasets demonstrate that \ourmethod{} consistently improves performance over state-of-the-art baselines, particularly in few-shot and cross-domain settings. These results validate both the theoretical foundations and the practical advantages of our approach.

\bibliographystyle{IEEEtran}
\bibliography{ref}

\begin{IEEEbiography}[{\includegraphics[width=1in,height=1.25in]{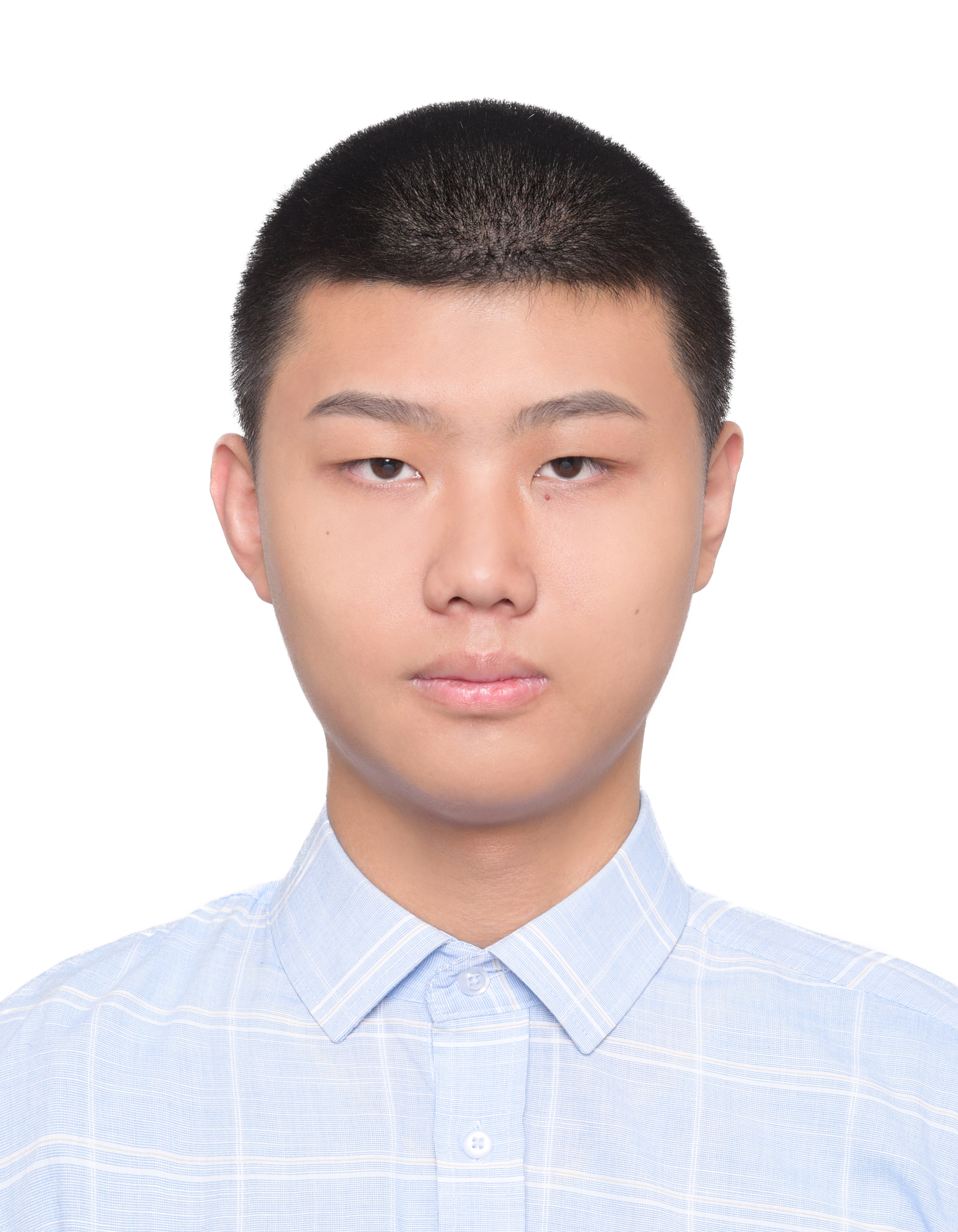}}]
{Qingyue Zhang} (Student Member, IEEE)
is currently pursuing the Ph.D. degree with the Tsinghua Shenzhen International Graduate School, Tsinghua University, Shenzhen, China. His current research interests include transfer learning and LLM.
\end{IEEEbiography}
\begin{IEEEbiography}[{\includegraphics[width=1in,height=1.25in]{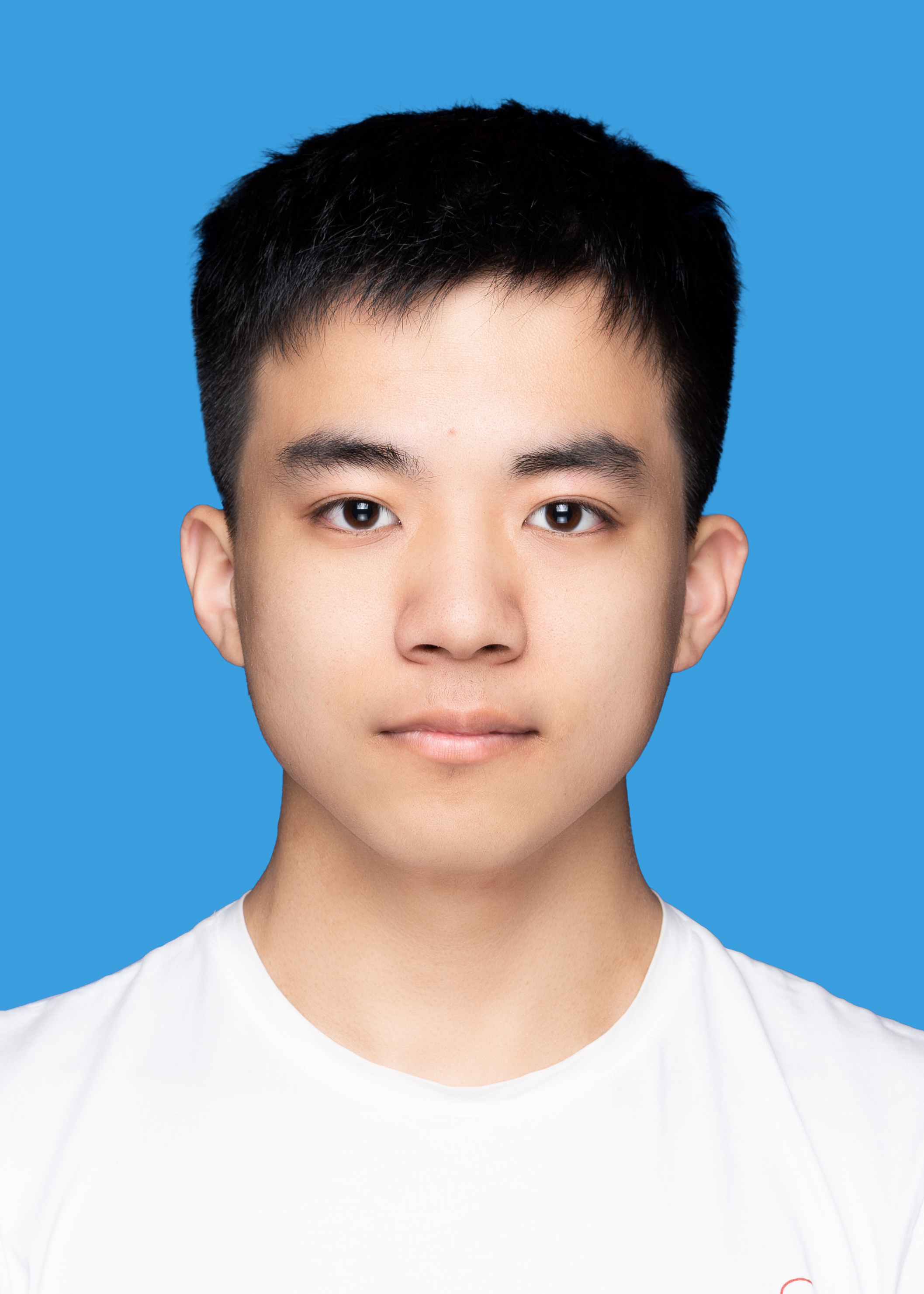}}]
{Chang Chu} (Student Member, IEEE)
is currently pursuing the M.S. degree with the Tsinghua Shenzhen International Graduate School, Tsinghua University, Shenzhen, China. His current research interests include transfer learning and class imbalance.  
\end{IEEEbiography}
\begin{IEEEbiography}[{\includegraphics[width=1in,height=1.25in]{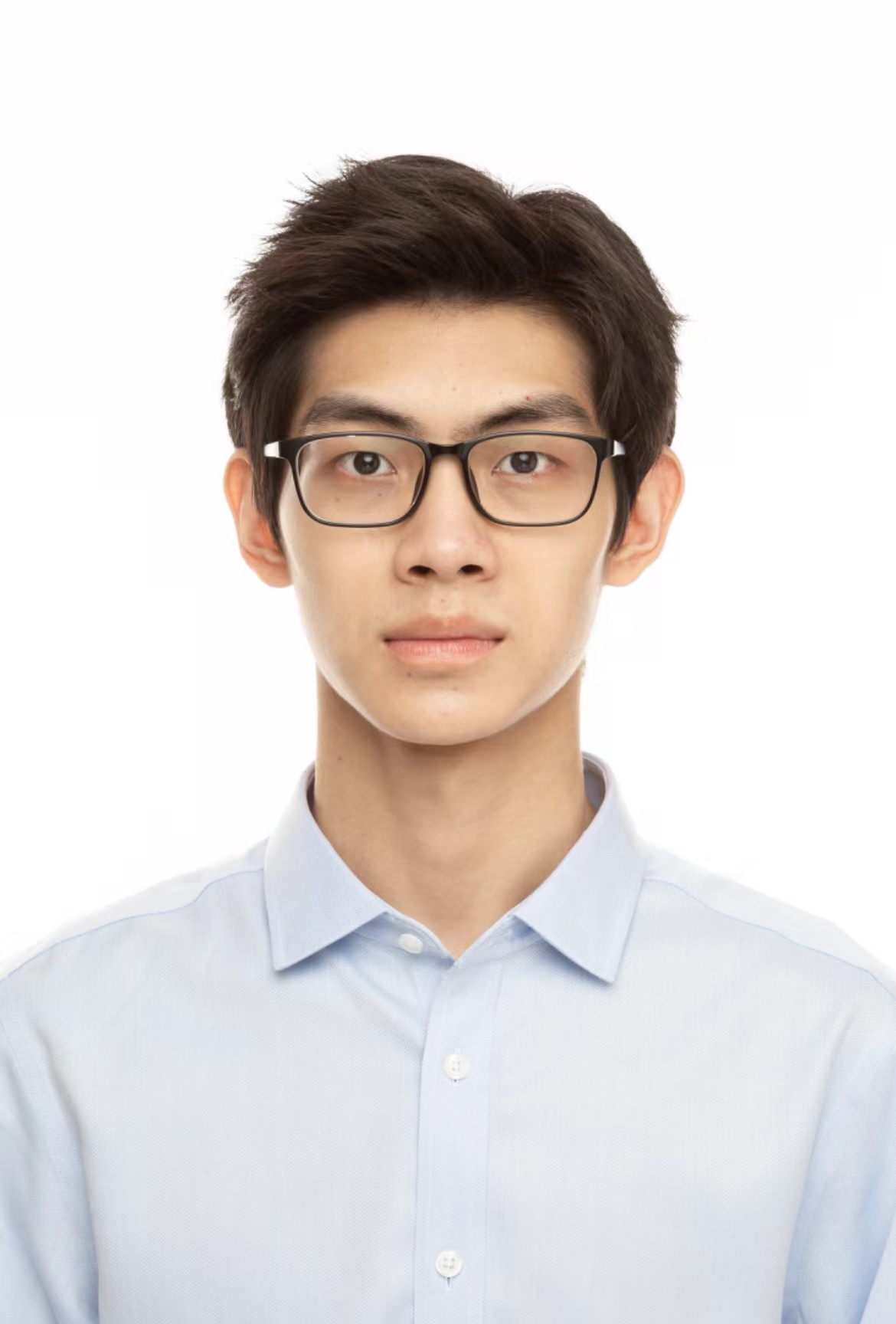}}]
{Haohao Fu} (Student Member, IEEE)
 is currently pursuing the Ph.D. degree with the Tsinghua Shenzhen International Graduate School, Tsinghua University, Shenzhen, China. His current research interests include information theory and machine learning.
\end{IEEEbiography}
\begin{IEEEbiography}[{\includegraphics[width=1in,height=1.25in]{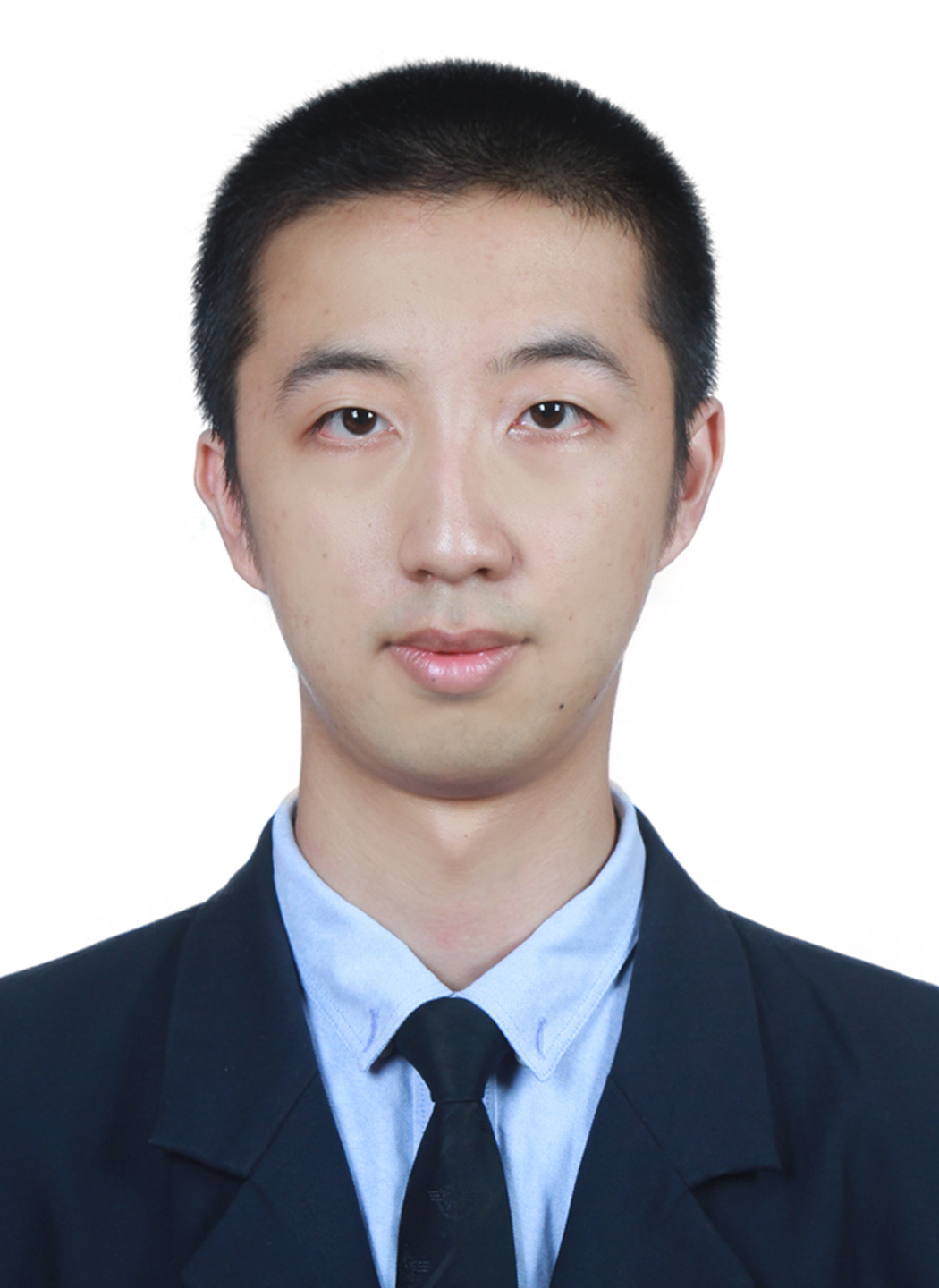}}]
{Tianren Peng} (Student Member, IEEE)
is currently pursuing the Ph.D. degree with the Tsinghua Shenzhen International Graduate School, Tsinghua University, Shenzhen, China.
His research interests include machine learning theory and its application in multi-modal learning and transfer learning.
\end{IEEEbiography}
\begin{IEEEbiography}[{\includegraphics[width=1in,height=1.25in]{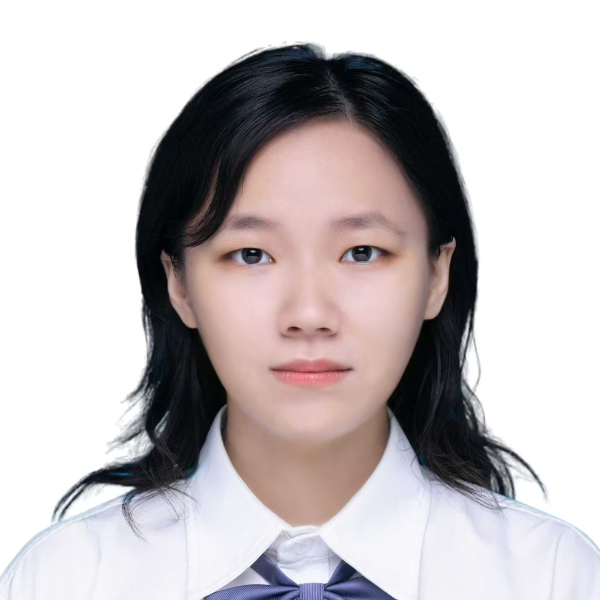}}]
{Yanru Wu} is currently pursuing the Ph.D. degree with the Tsinghua Shenzhen International Graduate School, Tsinghua University, Shenzhen, China. Her current research interests include transfer learning and continual learning.
\end{IEEEbiography}
\begin{IEEEbiography}[{\includegraphics[width=1in,height=1.25in]{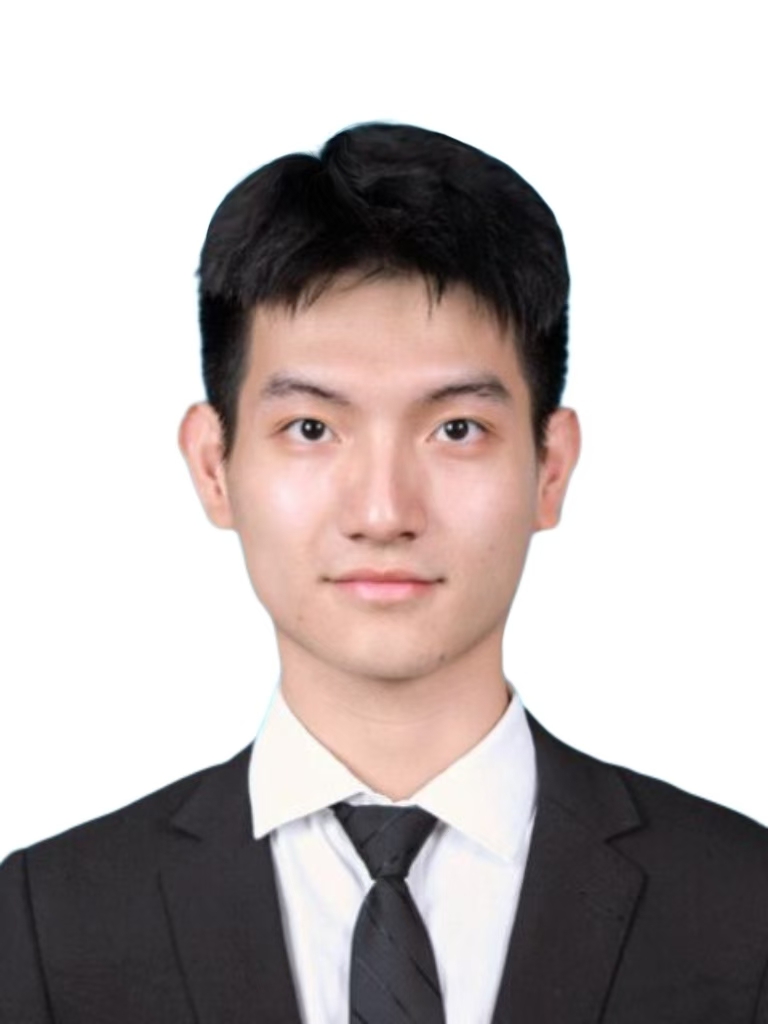}}]
{Guanbo Huang}
is currently pursuing the M.S. degree with the Tsinghua Shenzhen International Graduate School, Tsinghua University, Shenzhen, China.
His research interests include transfer learning, AIGC, RL, MLLM and Agent application.
\end{IEEEbiography}

\begin{IEEEbiography}[{\includegraphics[width=1in,height=1.25in]{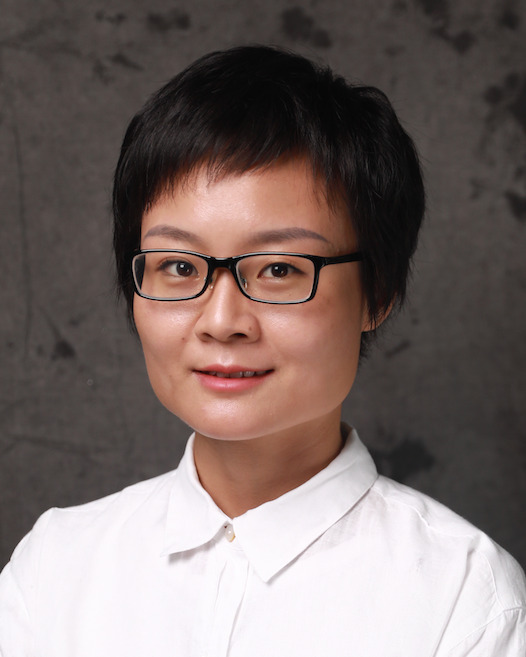}}]
{Yang Li} (Member, IEEE) is an Associate Professor with the School of Artificial Intelligence, The Chinese University of Hong Kong, Shenzhen. Prior to joining CUHK-Shenzhen, she was a postdoctoral researcher at the Tsinghua–Berkeley Shenzhen Institute and later worked with the Institute of Data and Information, Tsinghua Shenzhen International Graduate School. She received the B.A. degree in computer science and mathematics from the Smith College, Northampton, MA, USA, in 2011, and the Ph.D. degree in computer science from Stanford University, Stanford, CA, USA, in 2017. Her research interests include transfer learning, domain generalization, and interpretable representation learning, with applications to medical image analysis and spatio-temporal data modeling.
\end{IEEEbiography}

\begin{IEEEbiography}[{\includegraphics[width=1in,height=1.25in]{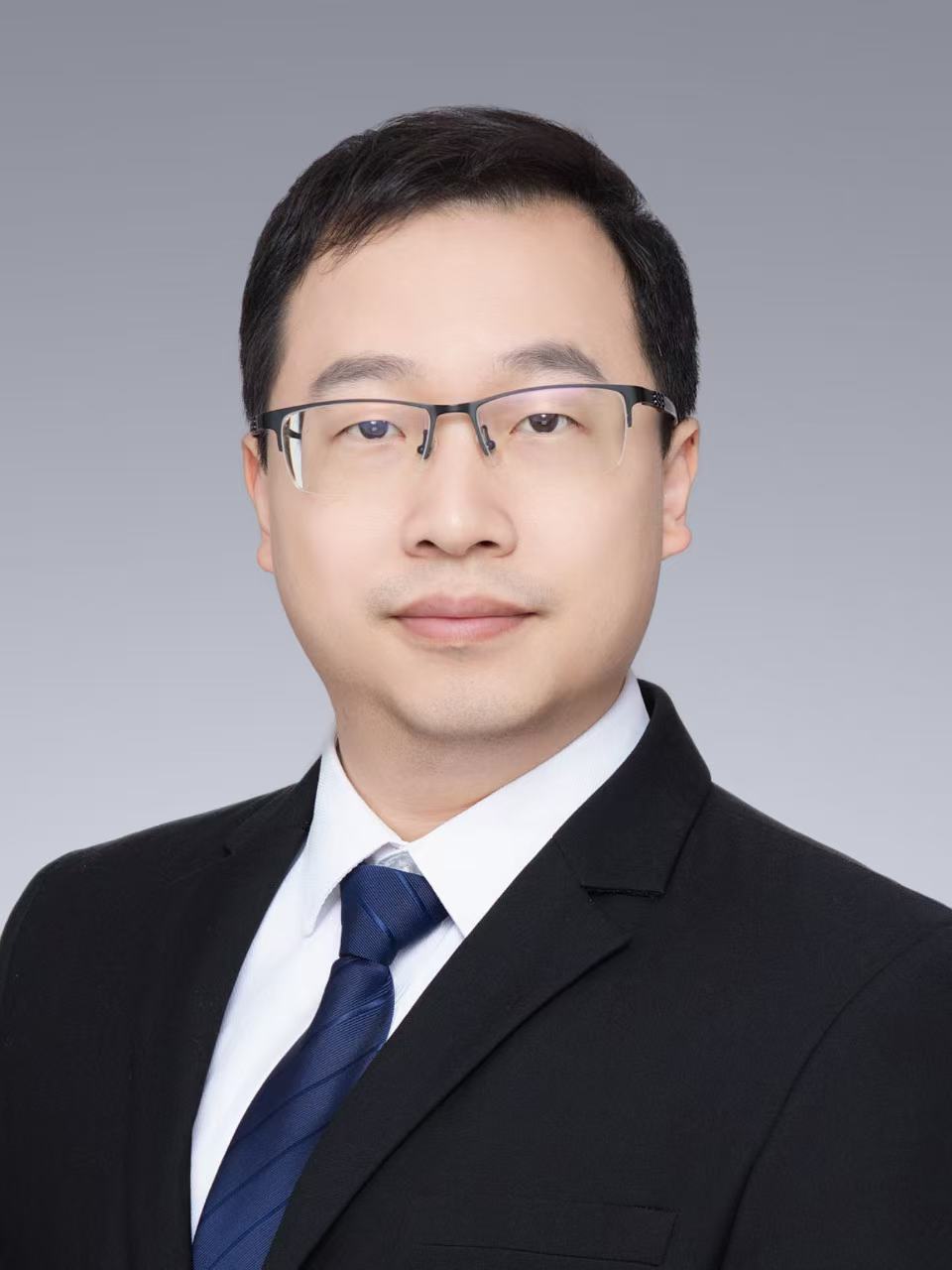}}]
{Shao-Lun Huang}
(Member, IEEE) received the B.S. degree (Hons.) from the Department of Electronic Engineering, National Taiwan University, Taipei, Taiwan, in 2008, and the M.S. and Ph.D. degrees from the Department of Electronic Engineering and Computer Sciences, Massachusetts Institute of Technology, Cambridge, MA, USA, in 2010 and 2013, respectively. From 2013 to 2016, he was a Post-Doctoral Researcher with the Department of Electrical Engineering, National Taiwan University, and also with the Department of Electrical Engineering and Computer Science, Massachusetts Institute of Technology. Since 2016, he has been with the Tsinghua-Berkeley Shenzhen Institute and Tsinghua Shenzhen International Graduate School, Tsinghua University, China, where he is currently an Associate Professor. His current research interests include information theory, communication theory, machine learning, and social networks.
\end{IEEEbiography}

\clearpage
\appendices

\section{Justification of the KL-Based Measure}
\label{appendix_K_L}

Compared to other measures, the K-L divergence exhibits a closer correspondence with the generalization error as measured by the cross-entropy loss. In this section, we will provide a concrete example (i.e., in a classification task) where the proposed K-L divergence formulation measure aligns with the cross-entropy generalization error.

In a classification task, let $ z \in \mathcal{Z} $ denote the input features and $ y \in \mathcal{Y} $ represent the output labels. The true data-generating distribution is $ P (z, y) $, while the joint distribution model learned from the training dataset is denoted as $ \hat{P} _ {\hat{\theta}} (z, y) $, where $\hat{\theta}$ denotes the learnable model parameters in training. The proposed measure $D(P \parallel \hat{P})$ is defined as follows.

\begin{align}
\label{K-Ldecomposition}
&D(P \parallel \hat{P}) =\sum_{z \in \mathcal{Z},y \in \mathcal{Y}} P(z, y) \log \frac{P(z, y)}{\hat{P} _ {\hat{\theta}} (z, y)}
\notag\\&=\sum_{z \in \mathcal{Z},y \in \mathcal{Y}} P(z, y) \log P(z, y)-\sum_{z \in \mathcal{Z},y \in \mathcal{Y}}P(z, y) \log {\hat{P} _ {\hat{\theta}} (y \mid z)}\notag\\&-\sum_{z \in \mathcal{Z},y \in \mathcal{Y}}P(z, y) \log {\hat{P}(z)},
\end{align}
where the second term of \eqref{K-Ldecomposition}, i.e., $$-\sum_{z \in \mathcal{Z},y \in \mathcal{Y}}P(z, y) \log {\hat{P} _ {\hat{\theta}} (y \mid z)},$$ is the standard definition of generalization error in a classification task.
Furthermore, the first term of \eqref{K-Ldecomposition} depends solely on the true data distribution and is constant. Though the third term involves $\hat{P}(z)$, $\hat{P}(z)$ is typically treated as a fixed marginal distribution decided by the training data rather than being learned in the model training. As a result, the third term is usually not parameterized by model parameter $\hat{\theta}$, and thus does not contribute to the optimization objective. In brief, the first term and the third term will not affect our optimization. Therefore, the K-L divergence formulation remains consistent with the standard definition of generalization error.

\section{Proof of Theorem \ref{thm:one_source}}
\label{appendix:one_source}
\begin{proof}

\begin{lemma}\label{thm:kl2mse}

In the asymptotic case, the proposed measure \eqref{eq:KL1} and the mean squared error have the relation as follows.
\begin{align}\label{eq:kl2mse}
\mathbb{E} \left[ D\left(P_{X;\theta_0}\middle\| P_{X;\hat{\theta}}\right) \right]&=\frac{1}{2} {J}(\theta_0)  \mathbb{E} \left[\left(\hat{\theta} - \theta_0\right)^2 \right] +\notag\\&o\left( \mathbb{E} \left[\left(\hat{\theta} - \theta_0\right)^2 \right] \right)+o(\frac{1}{N_0}).
\end{align}
\end{lemma}
\begin{proof}
In this section, for the sake of clarity, we will write $ \hat{\theta}$ in its parameterized form $\hat{\theta}(X^{N_0})$ when necessary, and these two forms are mathematically equivalent.
By taking Taylor expansion of $D\left(P_{X;\theta_0}\middle\| P_{X;\hat{\theta}(X^{N_0})} \right)$ at $\theta_0$, we can get

\begin{align}
\label{appendix:taylor1}
&D\left( P_{X;\theta_0}\middle\| P_{X;\hat{\theta}(X^{N_0})}\right)
\notag\\
&=-\sum_{x\in X}P_{X;\theta_0}(x)\log\frac{P_{X;\hat{\theta}(X^{N_0})}(x)}{P_{X;\theta_0}(x)}
\notag\\&=-\sum_{x\in X}P_{X;\theta_0}(x)\log\left(1+\frac{P_{X;\hat{\theta}(X^{N_0})}(x)-P_{X;\theta_0}(x)}{{P_{X;\theta_0}(x)}}\right)
\notag\\&=-\sum_{x\in X}P_{X;\theta_0}(x)\Bigg(\left(\frac{P_{X;\hat{\theta}(X^{N_0})}(x)-P_{X;\theta_0}(x)}{{P_{X;\theta_0}(x)}}\right)
\notag\\&-\frac{1}{2}\bigg(\frac{P_{X;\hat{\theta}(X^{N_0})}(x)-P_{X;\theta_0}(x)}{{P_{X;\theta_0}(x)}}\bigg)^2\Bigg)+o(\vert\hat{\theta}(X^{N_0}) - \theta_0\vert^2)  
\notag\\&= \frac{1}{2} \sum_{x\in X} \frac{\left(P_{X;\hat{\theta}(X^{N_0})}(x) - P_{X;\theta_0}(x)\right)^2}{P_{X;\theta_0}(x)}+o(\vert\hat{\theta}(X^{N_0}) - \theta_0\vert^2)  
\end{align}

                                                                                           
We denote $\delta$ as a small constant,  and we can rewrite \eqref{eq:KL1} as 
\begin{align}
\label{appendix:expect}
&\mathbb{E} \left[ D\left(P_{X;\theta_0}\middle\| P_{X;\hat{\theta}(X^{N_0})} \right) \right] \notag\\ 
&= \sum_{X^{N_0}}P_{X^n;\theta_0}(X^{N_0})D\left(P_{X;\theta_0}\middle\| P_{X;\hat{\theta}(X^{N_0})} \right)
\notag\\ 
&= \sum_{\left\{X^{N_0}:\vert\hat{\theta}(X^{N_0}) - \theta_0\vert^2<\delta\right\}}P_{X^n;\theta_0}(X^{N_0}) \cdot\notag\\ 
    &\left ( \frac{1}{2} \sum_{x\in X} \frac{\left(P_{X;\hat{\theta}(X^{N_0})}(x) - P_{X;\theta_0}(x)\right)^2}{P_{X;\theta_0}(x)}+o(\vert\hat{\theta}(X^{N_0}) - \theta_0\vert^2)  \right)\notag\\ 
    &+\sum_{\left\{X^{N_0}:\vert\hat{\theta}(X^{N_0}) - \theta_0\vert^2\ge\delta\right\}}P_{X^n;\theta_0}(X^{N_0})\cdot
    D\left(P_{X;\theta_0}\middle\| P_{X;\hat{\theta}(X^{N_0})} \right)\notag\\ 
    &= \sum_{\left\{X^{N_0}:\vert\hat{\theta}(X^{N_0}) - \theta_0\vert^2<\delta\right\}}P_{X^n;\theta_0}(X^{N_0}) \cdot\notag\\   
    &\left ( \frac{1}{2} \sum_{x\in X} \frac{ \left(\frac{\partial P_{X;\theta_0}(x)}{\partial \theta_0} (\hat{\theta}(X^{N_0}) - \theta_0)\right)^2}{P_{X;\theta_0}(x)}+o(\vert\hat{\theta}(X^{N_0}) - \theta_0\vert^2)  \right)\notag\\ 
    &+\sum_{\left\{X^{N_0}:\vert\hat{\theta}(X^{N_0}) - \theta_0\vert^2\ge\delta\right\}}P_{X^n;\theta_0}(X^{N_0})\cdot D\left(P_{X;\theta_0}\middle\| P_{X;\hat{\theta}(X^{N_0})} \right)\notag\\   
\end{align}
To facilitate the subsequent proof, we introduce the concept of "Dot Equal".
\begin{definition}(Dot Equal (\(\dot{=}\)))
    Specifically, given two functions \( f(n) \) and \( g(n) \), the notation \( f(n) \dot{=} g(n) \) is defined as
\begin{align}
    f(n) \dot{=} g(n) \quad \Leftrightarrow \quad \lim_{n \to \infty} \frac{1}{n}\log\frac{f(n)}{g(n)} = 0,
\end{align}
which shows that $f(n)$ and $g(n)$ have the same exponential decaying rate.
\end{definition}

We denote $\hat{P}_{X^{N_0}}$ as the empirical distribution of $X^{N_0}$. Applying Sanov's Theorem to \eqref{appendix:expect}, we can know that
\begin{align}
\label{appendix:dotequal_1}
P_{X^n;\theta_0}(X^{N_0})\doteq e^{-N_{0}D\left(\hat{P}_{X^{N_0}} \middle\| P_{X;\theta_0}\right)} 
\end{align}
Then, we aim to establish a connection between 
\eqref{appendix:dotequal_1} and $\vert\hat{\theta}(X^{N_0}) - \theta_0\vert^2$. From \eqref{appendix:taylor1}, we can know that the $D\left(\hat{P}_{X^{N_0}} \middle\| P_{X;\theta_0}\right)$ in \eqref{appendix:dotequal_1} can be transformed to
\begin{align}
\label{appendix:taylor2}
&D\left( \hat{P}_{X^{N_0}}\middle\| P_{X;\theta_0}\right)\notag\\&=\frac{1}{2} \sum_{x\in X} \frac{\left(\hat{P}_{X^{N_0}}(x) - P_{X;\theta_0}(x)\right)^2}{\hat{P}_{X^{N_0}}(x)}+o(\vert\hat{\theta}(X^{N_0}) - \theta_0\vert^2)  
\notag\\&= \frac{1}{2} \sum_{x\in X} \frac{\left(\hat{P}_{X^{N_0}}(x) - P_{X;\theta_0}(x)\right)^2}{P_{X;\theta_0}(x)}+o(\vert\hat{\theta}(X^{N_0}) - \theta_0\vert^2)   
\end{align}

From the characteristics of MLE, we can know that
\begin{align}
\label{appendix:mle_a1}
    &\mathbb{E}_{\hat{P}_{X^{N_0}}}\left[\frac{\partial\log P_{X;{\hat{\theta}(X^{N_0})}}(x)}{\partial \hat{\theta}}\right]
    \notag\\&=0   \notag\\&=\mathbb{E}_{\hat{P}_{X^{N_0}}}\left[\frac{\partial\log P_{X;{\theta_{0}}}(x)}{\partial \theta_{0}}\right]\notag\\&+\mathbb{E}_{\hat{P}_{X^{N_0}}}\left[\frac{\partial^{2}\log P_{X;{\theta_{0}}}(x)}{\partial \theta_0^{2}}\right]{\left(\hat{\theta}(X^{N_0})-\theta_0\right)}\notag\\&+O(\vert\hat{\theta}(X^{N_0}) - \theta_0\vert^2)  ,
\end{align}
which can be transformed to
\begin{align}
\label{appendix:mle_a2}
&\left(\hat{\theta}(X^{N_0})-\theta_0\right)+O(\vert\hat{\theta}(X^{N_0}) - \theta_0\vert^2)
\notag\\&=-\frac{\mathbb{E}_{\hat{P}_{X^{N_0}}}\left[\frac{\partial\log P_{X;{\theta_{0}}}(x)}{\partial \theta_0}\right]}{\mathbb{E}_{\hat{P}_{X^{N_0}}}\left[\frac{\partial^{2}\log P_{X;{\theta_{0}}}(x)}{\partial \theta_0^{2}}\right]}
\notag\\&=-\frac{\mathbb{E}_{\hat{P}_{X^{N_0}}}\left[\frac{\frac{\partial P_{X;{\theta_{0}}}(x)}{\partial \theta_0}}{P_{X;{\theta_{0}}}(x)}\right]}{\mathbb{E}_{\hat{P}_{X^{N_0}}}\left[\frac{\partial^{2}\log P_{X;{\theta_{0}}}(x)}{\partial \theta_0^{2}}\right]}
\notag\\&=\frac{\sum\limits_{x\in\mathcal{X}}{\left(\hat{P}_{X^{N_0}}(x)- P_{X;\theta_0}(x)\right)}\frac{\frac{\partial P_{X;{\theta_{0}}}(x)}{\partial \theta_0}}{P_{X;{\theta_{0}}}(x)}}{J(\theta_{0})}
\end{align}

Using the Cauchy-Schwarz inequality, we can obtain
\begin{align}
\label{Cauchy}
&\sum_{x\in X} \frac{\left(\hat{P}_{X^{N_0}}(x) - P_{X;\theta_0}(x)\right)^2}{P_{X;\theta_0}(x)}\cdot \sum_x \frac{(\frac{\partial P_{X;\theta_0}(x)}{\partial \theta_0})^2}{P_{X;\theta_0}(x)}
\ge
\notag\\&\left(\sum\limits_{x\in\mathcal{X}}{\left(\hat{P}_{X^{N_0}}(x)- P_{X;\theta_0}(x)\right)}\frac{\frac{\partial P_{X;{\theta_{0}}}(x)}{\partial \theta_0}}{P_{X;{\theta_{0}}}(x)}\right)^2
\end{align}
where
\begin{align}
\label{partial_to_fisher}
       &\sum_x \frac{(\frac{\partial P_{X;\theta_0}(x)}{\partial \theta_0})^2}{P_{X;\theta_0}(x)}
       =\sum_x P_{X;\theta_0}(x) \left( \frac{1}{P_{X;\theta_0}(x)} \frac{\partial P_{X;\theta_0}(x)}{\partial \theta_0} \right)^2 
       \notag\\&=\sum_x P_{X;\theta_0}(x) \left( \frac{\partial \log P_{X;\theta_0}(x)}{\partial \theta_0} \right)^2 
       =J(\theta_0) 
\end{align}
Combining with \eqref{appendix:taylor2}, \eqref{appendix:mle_a2}, and \eqref{partial_to_fisher}, the inequality \eqref{Cauchy} can be transformed to 
\begin{align}
\label{Cauchy2}
&D\left( \hat{P}_{X^{N_0}}\middle\|P_{X;\theta_0} \right)
\notag\\&= \frac{1}{2} \sum_{x\in X} \frac{\left(\hat{P}_{X^{N_0}}(x) - P_{X;\theta_0}(x)\right)^2}{P_{X;\theta_0}(x)}+o(\vert\hat{\theta}(X^{N_0}) - \theta_0\vert^2)  \notag\\&\ge\frac{1}{2}J(\theta_0)\left(\hat{\theta}(X^{N_0})-\theta_0+O(\vert\hat{\theta}(X^{N_0}) - \theta_0\vert^2)\right)^2\notag\\&+o(\vert\hat{\theta}(X^{N_0}) - \theta_0\vert^2)  
\notag\\&=\frac{1}{2}J(\theta_0)\vert\hat{\theta}(X^{N_0}) - \theta_0\vert^2
+o(\vert\hat{\theta}(X^{N_0}) - \theta_0\vert^2)  
\end{align}

Combining \eqref{appendix:dotequal_1} and \eqref{Cauchy2}, we can know that
\begin{align}
\label{appendix:dotequal_2}
    P_{X^n;\theta_0}(X^{N_0})\doteq e^{-N_{0}D\left(\hat{P}_{X^{N_0}} \middle\| P_{X;\theta_0}\right)} \leq e^{\frac{-N_{0}J(\theta_0)\vert\hat{\theta}(X^{N_0}) - \theta_0\vert^2}{2}} 
\end{align}

For the first term in \eqref{appendix:expect}, the negligible term $o(\vert\hat{\theta}(X^{N_0}) - \theta_0\vert^2)$ is of lower order than the main term and can therefore be ignored. 
As for the second term in \eqref{appendix:expect}, its probability 
is bounded by 
$O( e^{\frac{-N_{0}J(\theta_0)\vert\hat{\theta}(X^{N_0}) - \theta_0\vert^2}{2}})$
according to \eqref{appendix:dotequal_2}, which decays 
exponentially with $N_0$. Consequently, this probability is $o(1/N_0)$, since exponential 
decay in $N_0$ dominates any polynomial rate. Moreover, under the compactness assumption on the 
parameter space stated in Section~\ref{subsection_Problem Formulation}, the K-L divergence is 
uniformly bounded, implying that the entire second term is 
$o(1/N_0)$.
By transferring \eqref{appendix:expect}, we can get
\begin{align}
\label{appendix:expect3}
&\mathbb{E} \left[ D\left(P_{X;\theta_0}\middle\| P_{X;\hat{\theta}(X^{N_0})} \right) \right] \notag\\ 
&=\frac{1}{2} \mathbb{E} \left[ \sum_{x\in X} \frac{ \left(\frac{\partial P_{X;\theta_0}(x)}{\partial \theta_0} (\hat{\theta}(X^{N_0}) - \theta_0)\right)^2}{P_{X;\theta_0}(x)} \right]\notag\\&+o\left( \mathbb{E} \left[\left(\hat{\theta} - \theta_0\right)^2 \right] \right)+o{(\frac{1}{N_0})}.
\end{align}


We then transform \eqref{appendix:expect3} with \eqref{partial_to_fisher}
\begin{align}
    \label{KL1_cont}
    &\frac{1}{2} \mathbb{E} \left[  \sum_{x\in X} \frac{ \left(\frac{\partial P_{X;\theta_0}(x)}{\partial \theta_0} (\hat{\theta}(X^{N_0}) - \theta_0)\right)^2}{P_{X;\theta_0}(x)} \right]\notag\\
    &=  \frac{1}{2} \mathbb{E} \left[\left(\hat{\theta} - \theta_0\right)^2 \right] \sum_x \frac{(\frac{\partial P_{X;\theta_0}(x)}{\partial \theta_0})^2}{P_{X;\theta_0}(x)}  \notag\\
    &= \frac{1}{2} \mathbb{E} \left[\left(\hat{\theta} - \theta_0\right)^2 \right]  {J}(\theta_0) 
\end{align}
Combining  \eqref{appendix:expect3}, \eqref{KL1_cont}, we can get 
\begin{align}
\mathbb{E} \left[ D\left(P_{X;\theta_0}\middle\| P_{X;\hat{\theta}}\right) \right]&=\frac{1}{2} \mathbb{E} \left[\left(\hat{\theta} - \theta_0\right)^2 \right]  {J}(\theta_0) +\notag\\&o\left( \mathbb{E} \left[\left(\hat{\theta} - \theta_0\right)^2 \right] \right)+o{(\frac{1}{N_0})}.
\end{align}
\end{proof}

We denote $E_{\hat{\theta}}$ as the expectation of $\hat{\theta}$. Then, we can make a decomposition of the K-L measure, i.e.,
\begin{align}
\label{KLdecomposition}
& \mathbb{E}\left[D\left( P_{X;\theta_0}\middle\| P_{X;\hat{\theta}}\right)\right]
\notag\\
&= \frac{1}{2} \mathbb{E} \left[\left(\hat{\theta} - \theta_0\right)^2 \right]  {J}(\theta_0)+o(\frac{1}{N_0}) \notag\\
&=\frac{1}{2}{J}(\theta_0)\left(\mathbb{E}\left[ \left(\hat{\theta} - E_{\hat{\theta}}\right)^2\right]+\mathbb{E}\left[ \left(E_{\hat{\theta}} - \theta_0\right)^2\right]\right)
\notag\\&+o\left(\mathbb{E}\left[ \left(\hat{\theta} - E_{\hat{\theta}}\right)^2\right]+\mathbb{E}\left[ \left(E_{\hat{\theta}} - \theta_0\right)^2\right]\right)+o(\frac{1}{N_0}).
\end{align}

In the following, we first derive the expression of $E_{\hat{\theta}}$, as well as the distributional expression of $(\hat{\theta} - E_{\hat{\theta}})$. We then use these results to obtain the representation of the K-L measure.
In our setting, the MLE and its expected definition are given as follows.

\begin{align}
    \hat{\theta} \defeq \argmax_{\theta} L_n(\theta),\label{eq_theta} \\ 
    E_{\hat{\theta}} \defeq \argmax_{\theta} L(\theta) .\label{eq_E_theta}
\end{align}
where
\begin{align}
    &L_n(\theta) = \frac{\sum\limits_{x \in X^{N_0}} \log P_{X;\theta}(x) + \sum\limits_{x \in X^{n_1}} w_1 \log P_{X;\theta}(x)}{N_0 + w_1n_1}, \label{L_function_expection}\\ 
    &L(\theta) = \frac{N_0\mathbb{E}_{\theta_0}[ \log P_{X;\theta}(x)]+w_1n_1\mathbb{E}_{\theta_1}[ \log P_{X;\theta}(x)]}{N_0 + w_1n_1}. 
\end{align}

For $E_{\hat{\theta}}$, we have the following lemma to establish its relation to the $\theta_0$ and $\theta_1$.
\begin{lemma}
\label{lemma_E_theta}$E_{\hat{\theta}}$ is a weighted average of $\theta_0$ and $\theta_1$,  given by

\begin{align}
\label{weighted_theta}
    E_{\hat{\theta}} = \frac{N_0 \theta_0 + w_1n_1 \theta_1}{N_0 + w_1n_1} + O\left(\frac{1}{N_{0}}\right).
\end{align}
\end{lemma}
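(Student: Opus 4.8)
The plan is to recast the definition of $E_{\hat\theta}$ as a divergence-minimization problem and then exploit the near-coincidence of $\theta_0$ and $\theta_1$ to reduce it to a quadratic one. First I would use the identity $\mathbb{E}_{\theta_i}[\log P_{X;\theta}(x)] = -H(P_{X;\theta_i}) - D(P_{X;\theta_i}\|P_{X;\theta})$ in the definition of $L(\theta)$ to obtain
\begin{align}
L(\theta) = c_0 - \frac{N_0\, D(P_{X;\theta_0}\|P_{X;\theta}) + w_1 n_1\, D(P_{X;\theta_1}\|P_{X;\theta})}{N_0 + w_1 n_1},\notag
\end{align}
where $c_0$ does not depend on $\theta$. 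Hence maximizing $L$ is the same as minimizing $F(\theta)\defeq N_0\, D(P_{X;\theta_0}\|P_{X;\theta}) + w_1 n_1\, D(P_{X;\theta_1}\|P_{X;\theta})$, i.e. $E_{\hat\theta} = \argmin_\theta F(\theta)$.

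Next, under the regularity conditions that already underlie the asymptotic normality in \eqref{eq:asymptotic_normality}, the divergence admits the local Taylor expansion $D(P_{X;\theta'}\|P_{X;\theta}) = \tfrac12 J(\theta')(\theta-\theta')^2 + O(|\theta-\theta'|^3)$ (value and first derivative vanish at $\theta=\theta'$, and the second derivative equals $J(\theta')$ by the information-matrix equality). Because $|\theta_0-\theta_1| = O(N_0^{-1/2})$, continuity of $J$ gives $J(\theta_1) = J(\theta_0) + O(N_0^{-1/2})$, so I can split
\begin{align}
F(\theta) = \tfrac12 J(\theta_0)\bigl[N_0(\theta-\theta_0)^2 + w_1 n_1(\theta-\theta_1)^2\bigr] + R(\theta).\notag
\end{align}
The bracketed quadratic is minimized \emph{exactly} at $\bar\theta\defeq\dfrac{N_0\theta_0 + w_1 n_1\theta_1}{N_0 + w_1 n_1}$ and has curvature $J(\theta_0)(N_0 + w_1 n_1)$, which is of order $N_0$ since $n_1\le N_1\asymp N_0$ and the weight is bounded.

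It then remains to bound the perturbation caused by $R$. On an $O(N_0^{-1/2})$-neighborhood of $\bar\theta$, which contains $\theta_0$, $\theta_1$, and—after a preliminary $O(N_0^{-1/2})$ localization bound obtained by comparing $F(E_{\hat\theta})\le F(\bar\theta)=O(1)$ with the strict growth of $F$ away from the cluster point—also $E_{\hat\theta}$, the cubic remainders contribute $O(N_0\cdot N_0^{-3/2})=O(N_0^{-1/2})$ to $F$, and the $J(\theta_1)-J(\theta_0)$ mismatch contributes $w_1 n_1\cdot O(N_0^{-1/2})\cdot O(N_0^{-1})=O(N_0^{-1/2})$; differentiating each such term multiplies its magnitude by at most $O(N_0^{1/2})$, so $R'(\theta)=O(1)$ on this neighborhood. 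The stationarity condition $F'(E_{\hat\theta})=0$ then reads $J(\theta_0)(N_0 + w_1 n_1)(E_{\hat\theta}-\bar\theta) + R'(E_{\hat\theta})=0$, whence $E_{\hat\theta}-\bar\theta = -R'(E_{\hat\theta})/\bigl[J(\theta_0)(N_0+w_1 n_1)\bigr] = O(1/N_0)$, which is precisely \eqref{weighted_theta}. Strict local convexity of $F$ near $\bar\theta$ (from $J(\theta_0)>0$) guarantees that this stationary point is the minimizer of interest.

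The main obstacle is exactly this last bookkeeping step. A crude argument—linearizing the score of $P_{X;\theta_1}$ about $\theta_0$ and discarding the $w_1 n_1$ term from the denominator—would only give $E_{\hat\theta} = \theta_0 + \tfrac{w_1 n_1}{N_0}(\theta_1-\theta_0)$, which differs from the claimed weighted average by $O(N_0^{-1/2})$, not $O(N_0^{-1})$. One must therefore keep the full quadratic $N_0(\theta-\theta_0)^2 + w_1 n_1(\theta-\theta_1)^2$ intact (so that the leading minimizer is genuinely $\bar\theta$) and verify that $R'$ is only $O(1)$, not $O(N_0^{1/2})$; this is what pins the residual down to $O(1/N_0)$ and forces the exact weighted-average form. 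The high-dimensional version in Proposition~\ref{prop:highdimension} follows the same route with $J(\theta_0)$ replaced by the Fisher information matrix, requiring in addition its nonsingularity.
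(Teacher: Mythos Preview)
Your argument is correct and follows a genuinely different route from the paper. The paper first relaxes the maximization of $L(\theta)$ to the full probability simplex: treating the values $P_{X;\theta}(x)$ as free variables subject only to $\sum_x P_{X;\theta}(x)=1$, a Lagrange-multiplier computation shows that the unconstrained optimum is exactly the mixture $\tfrac{N_0 P_{X;\theta_0}+w_1 n_1 P_{X;\theta_1}}{N_0+w_1 n_1}$; it then Taylor-expands both this mixture and $P_{X;E_{\hat\theta}}$ about $\theta_0$ and matches first-order coefficients to read off $E_{\hat\theta}=\bar\theta+O(|\theta_0-\theta_1|^2)=\bar\theta+O(1/N_0)$. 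You instead stay inside the parametric family throughout: you rewrite the objective as a weighted sum of KL divergences, replace each by its local Fisher quadratic, identify $\bar\theta$ as the exact minimizer of the leading quadratic, and then control the shift of the true minimizer via the stationarity equation together with an explicit $R'=O(1)$ remainder bound. Your route is more self-contained and makes the order bookkeeping transparent---in particular, your closing remark about why one must keep the full quadratic $N_0(\theta-\theta_0)^2+w_1 n_1(\theta-\theta_1)^2$ intact (rather than linearize about $\theta_0$ and drop $w_1 n_1$ from the denominator) pinpoints exactly where a careless argument would lose a factor of $N_0^{-1/2}$. The paper's route is shorter and yields the pleasant intermediate identity that the population optimum over all distributions is the mixture, but the passage from that simplex solution back to the constrained parametric minimizer $E_{\hat\theta}$ is handled somewhat informally by equating Taylor expansions.
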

\begin{proof}
We could equivalently transform \eqref{eq_E_theta} into
\begin{align}
     E_{\hat{\theta}} \defeq \argmin_{\theta} -L(\theta).
\end{align}
We then employ the Lagrangian method to solve this minimization problem. We treat $P_{X;\theta}(x), \forall x\in \mathcal{X}$ as the variables with constraint 
\begin{align}\label{sumequal1}\sum\limits_{x\in \mathcal{X}} P_{X;\theta}(x) = 1 .
\end{align} Then we can construct the corresponding Lagrangian function

\begin{align}
&\text{Lagrangian}(P, \lambda) =
- \sum_{x\in \mathcal{X}} \bigg( \frac{N_0P_{X;\theta_0}(x)}{N_0 + w_1n_1} +
\notag\\&
\frac{w_1n_1P_{X;\theta_1}(x)}{N_0 + w_1n_1}  \bigg) \log P_{X;\theta}(x)
+ \lambda \left( \sum_{x\in \mathcal{X}} P_{X;\theta}(x) - 1 \right).
\end{align}

By jointly solving the first-order necessary conditions
\begin{align}
&\frac{\partial\text{Lagrangian}(P, \lambda)}{\partial P_{X;\theta}(x)}=0, \forall x\in\mathcal{X}\\
&\frac{\partial\text{Lagrangian}(P, \lambda)}{\partial\lambda}=0,
\end{align}
we have
\begin{align}
\label{eq:beginprove37}
P_{X;E_{\hat{\theta}}}(x) = \frac{N_0P_{X;\theta_0}(x)+w_1n_1P_{X;\theta_1}(x)}{N_0 + w_1n_1} , \forall x\in \mathcal{X}.
\end{align}
By doing a Taylor Expansion of \eqref{eq:beginprove37} around $\theta_0$, we can get 
\begin{align}
\label{eq:1111}
&P_{X;\theta_0}(x)+\frac{\partial P_{X;\theta_0}(x)}{\partial \theta}(E_{\hat{\theta}}-\theta_0)+O(\vert E_{\hat{\theta}}-\theta_0\vert^2)\notag\\&=\frac{N_0}{N_0 + w_1n_{1}} P_{X;\theta_0}(x)\notag +\frac{w_1n_{1}}{N_0 + w_1n_{1}} \bigg(P_{X;\theta_0}(x)+\\ &\frac{\partial P_{X;\theta_0}(x)}{\partial \theta}(\theta_1-\theta_0)+O(\vert\theta_0-\theta_1\vert^2)\bigg).
\end{align}
From \eqref{eq:1111} we can get
\begin{align}
\label{eq:3737}
    E_{\hat{\theta}} = \frac{N_{0}\theta_0 + w_1n_{1}\theta_1}{N_{0}+w_1n_{1}} + O(\vert\theta_0-\theta_1\vert^2),
\end{align}
 
So we can get 
\begin{align}
\label{eq:373737}
    E_{\hat{\theta}} = \frac{N_{0}\theta_0 + w_1n_{1}\theta_1}{N_{0}+w_1n_1} + O\left(\frac{1}{N_{0}}\right).
\end{align}
\end{proof}

Because $\hat{\theta}$ is a maximizer of $L_n(\hat{\theta})$, $E_{\hat{\theta}}$ is a maximizer of $L(\hat{\theta})$, we can get
\begin{align}
    &L_n'(\hat{\theta}) = 0 = L_n'(E_{\hat{\theta}}) + L_n''(E_{\hat{\theta}})(\hat{\theta} - E_{\hat{\theta}})
    , \label{derivative1to2} \\
    &L'(E_{\hat{\theta}}) = 0 = \frac{N_0}{N_0 + w_1n_1} \mathbb{E}_{\theta_0}\left[ \frac{\partial\log{P_{X;E_{\hat{\theta}}}(x)}}{\partial \theta}\right] 
    \notag \\&\qquad\qquad\quad
    + \frac{w_1n_1}{N_0 + w_1n_1} \mathbb{E}_{\theta_1}\left[\frac{\partial\log{P_{X;E_{\hat{\theta}}}(x)}}{\partial \theta}\right]\label{derivative1to2_2}. 
\end{align}

By transforming \eqref{derivative1to2}, we have
\begin{align}
\label{frac_distribution}
    &\sqrt{N_0 + w_1n_1} (\hat{\theta} - E_{\hat{\theta}}) = -\frac{\sqrt{N_0 + w_1n_1}L_n'(E_{\hat{\theta}})}{L_n''(E_{\hat{\theta}})} . 
\end{align}
Combining with \eqref{derivative1to2_2}, we can transform the numerator of the right-hand side of \eqref{frac_distribution} as follows:

\begin{align}
\label{eq21_2025_6_27}
    &\sqrt{N_0 + w_1n_1} L_n'(E_{\hat{\theta}}) 
    \notag \\
    & = \sqrt{N_0 + w_1n_1} \left(L_n'(E_{\hat{\theta}}) - L'(E_{\hat{\theta}})\right)
    \notag \\
    & = \sqrt{\frac{N_0}{N_0 + w_1n_1}} \Bigg( \sqrt{\frac{1}{N_0}} \sum_{x \in X^{N_0}} \frac{\partial \log{P_{X;E_{\hat{\theta}}}(x)}}{\partial \theta} 
    \notag \\&
    - \sqrt{N_0} \mathbb{E}_{\theta_0}[\frac{\partial \log{P_{X;E_{\hat{\theta}}}(x)}}{\partial \theta}] \Bigg) \notag \\ 
    &+\sqrt{\frac{w_1n_1}{N_0 + w_1n_1}}\Bigg( \sqrt{\frac{1}{w_1n_1}} \sum_{x \in X^{n_1}} w_1 \frac{\partial \log{P_{X;E_{\hat{\theta}}}(x)}}{\partial \theta}  
    \notag \\ &
    - \sqrt{w_1n_1} \mathbb{E}_{\theta_1}[\frac{\partial \log{P_{X;E_{\hat{\theta}}}(x)}}{\partial \theta}] \Bigg).
\end{align}

Applying the Central Limit Theorem to \eqref{eq21_2025_6_27}, we can get
\begin{align}\label{lntheta3}
    &\sqrt{N_{0}+w_1n_1} L_n^{'}( E_{\hat{\theta}})\xrightarrow{a.s.}\\&\mathcal{N} \Bigg(0, 
    \frac{N_0}{N_{0}+w_1n_1}\bigg(\mathbb{E}_{\theta_0}\left[\left ( \frac{\partial \log{P_{X;E_{\hat{\theta}}}(x)}}{\partial \theta}  \right )^2 \right]  
    \nonumber\\&-\mathbb{E}_{\theta_0}\left[\left ( \frac{\partial \log{P_{X;E_{\hat{\theta}}}(x)}}{\partial \theta}  \right ) \right]^2\bigg)+\nonumber\\
    &\frac{w_1^2n_1}{N_{0}+w_1n_1}\bigg(\mathbb{E}_{\theta_1}\left[\left ( \frac{\partial \log{P_{X;E_{\hat{\theta}}}(x)}}{\partial \theta}  \right )^2 \right]
    \nonumber\\&-\mathbb{E}_{\theta_1}\left[\left ( \frac{\partial \log{P_{X;E_{\hat{\theta}}}(x)}}{\partial \theta}  \right ) \right]^2\bigg) \Bigg)
\end{align}

By taking Taylor expansion of $ E_{\hat{\theta}}$ at $\theta_0$, we can get
\begin{align}
\label{lntheta31}
&\mathbb{E}_{\theta_0}\left[\left ( \frac{\partial \log{P_{X;E_{\hat{\theta}}}(x)}}{\partial \theta}  \right )^2\right]\nonumber\\
&=\mathbb{E}_{\theta_0}\bigg[\bigg( \frac{\partial \log{P_{X;\theta_0}(x)}}{\partial \theta}+\frac{\partial }{\partial \theta} \frac{\partial \log{P_{X;\theta_0}(x)}}{\partial  \theta}( E_{\hat{\theta}}-\theta_0)
\nonumber\\&+O(\frac{1}{N_{0}}) \bigg)^2\bigg]\nonumber\\
&=J(\theta_0)+( E_{\hat{\theta}}-\theta_0)\mathbb{E}_{\theta_0}\left [ \frac{\partial \log{P_{X;E_{\hat{\theta}}}(x)}}{\partial \theta}\frac{\partial^2  \log{P_{X;E_{\hat{\theta}}}(x)}}{\partial\theta^2 } \right ]\nonumber\\&+O(\frac{1}{N_{0}})\nonumber\\
&=J(\theta_0)+O(\frac{1}{\sqrt{N_{0}}})
\end{align}
and
\begin{align}
\label{lntheta32}
&\mathbb{E}_{\theta_0}\left[\left ( \frac{\partial \log{P_{X;E_{\hat{\theta}}}(x)}}{\partial \theta}  \right )\right]^2\nonumber\\
&=\mathbb{E}_{\theta_0}\bigg[\bigg ( \frac{\partial\log{P_{X;\theta_0}(x)} }{\partial \theta}+\frac{\partial }{\partial \theta} \frac{\partial \log{P_{X;\theta_0}(x)}}{\partial  \theta}( E_{\hat{\theta}}-\theta_0)\nonumber\\&+O(\frac{1}{N_{0}}) \bigg )\bigg]^2\nonumber\\
&=\mathbb{E}_{\theta_0}\left[\left ( \frac{\partial }{\partial \theta} \frac{\partial \log{P_{X;\theta_0}(x)}}{\partial  \theta}( E_{\hat{\theta}}-\theta_0)+O(\frac{1}{N_{0}}) \right )\right]^2\nonumber\\
&=( E_{\hat{\theta}}-\theta_0)^2E_{\theta_0}\left[\left ( \frac{\partial }{\partial \theta} \frac{\partial \log{P_{X;\theta_0}(x)}}{\partial  \theta} \right )\right]^2+o(\frac{1}{N_{0}})\nonumber\\&=O(\frac{1}{N_{0}})
\end{align}

Similarly to \eqref{lntheta31} and \eqref{lntheta32}, we can prove
\begin{align}
\label{lntheta33}
&\mathbb{E}_{\theta_1}\left[\left ( \frac{\partial \log{P_{X;E_{\hat{\theta}}}(x)}}{\partial \theta}  \right )^2\right]=J(\theta_1)+O(\frac{1}{\sqrt{N_{0}}}),
\end{align}
and
\begin{align}
\label{lntheta34}
&\mathbb{E}_{\theta_1}\left[\left ( \frac{\partial \log{P_{X;E_{\hat{\theta}}}(x)}}{\partial \theta}  \right )\right]^2=O(\frac{1}{N_{0}}).
\end{align}

By substituting \eqref{lntheta31}, \eqref{lntheta32}, \eqref{lntheta33}, and \eqref{lntheta34} into \eqref{lntheta3}, we obtain
\begin{align}
\label{CLT}
    &\sqrt{N_0 + w_1n_1} L_n'(E_{\hat{\theta}}) \xrightarrow{a.s.}
    \notag\\&
    \mathcal{N}\Bigg(0, \frac{N_0}{N_0 + w_1n_1} J(\theta_0) + \frac{w_1^2n_1}{N_0 + w_1n_1}  J(\theta_1)\Bigg) .
\end{align}

Moreover, we know $L_n^{''}{( E_{\hat{\theta}})}\xrightarrow{p}-J( E_{\hat{\theta}})$. Given that $\theta_0$, $\theta_1$, and $E_{\hat{\theta}}$ are already assumed to be sufficiently close, we can use this, along with a Taylor expansion, to estimate the distance between their Fisher information matrices. We conclude that the discrepancy among ${J}(\theta_0)$, ${J}(\theta_1)$, and ${J}(E_{\hat{\theta}})$ is of the order $O\left(\frac{1}{\sqrt{N_0}}\right)$.
\begin{align}
\label{appendix_Fisher_matrixs_relations}
    J({\theta}_1) &= \mathbb{E}_{{\theta}_1} \bigg[ \left( \frac{\partial}{\partial {{\theta}}} \log P_{X;{{\theta}_1}} \right) ^2 \bigg]\notag\\
    &=\mathbb{E}_{{\theta}_1} \bigg[ \bigg( \frac{\partial}{\partial {{\theta}}} \log P_{X;{{\theta}_0}}+\frac{\partial^2\log P_{X;{{\theta}_0}}}{\partial {{\theta}^2}}({\theta}_1-{\theta}_0)\notag\\ 
    &+O(\frac{1}{N_0}) \bigg) ^2 \bigg]\notag\\
    &=\mathbb{E}_{{\theta}_1} \bigg[ \left( \frac{\partial}{\partial {{\theta}}} \log P_{X;{{\theta}_0}} \right) ^2 \bigg]+O(\frac{1}{\sqrt{N_0}})\notag\\
    &=\sum_{x\in \cX}P_{X;{{\theta}_1}}(x)\left( \frac{\partial}{\partial {{\theta}}} \log P_{X;{{\theta}_0}} (x)\right) ^2+O(\frac{1}{\sqrt{N_0}}) \notag\\
    &=\sum_{x\in \cX}\left(P_{X;{{\theta}_0}}(x)+\frac{\partial P_{X;{{\theta}_0}}}{\partial {{\theta}}}({\theta}_1-{\theta}_0)+O(\frac{1}{N_0})\right)\cdot\\&\left( \frac{\partial}{\partial {{\theta}}} \log P_{X;{{\theta}_0}} (x)\right) ^2+O(\frac{1}{\sqrt{N_0}}) \notag \\
    &=\sum_{x\in \cX}P_{X;{{\theta}_0}}(x)\left( \frac{\partial}{\partial {{\theta}}} \log P_{X;{{\theta}_0}} (x)\right) ^2+O(\frac{1}{\sqrt{N_0}}) \\
    &=J({\theta}_0)+O(\frac{1}{\sqrt{N_0}}) 
\end{align}

Combining with \eqref{frac_distribution} and \eqref{CLT},
\begin{align}
\label{CLT2}
    (\hat{\theta} - E_{\hat{\theta}})\xrightarrow{d} N\left(0, \frac{N_0 + w_1^2n_1}{(N_0 + w_1n_1)^2} \frac{1}{J(\theta_0)}\right).
\end{align}

Substituting \eqref{weighted_theta} and \eqref{CLT2} into \eqref{KLdecomposition}, we have
\begin{align}
\label{single_source_final}
    &\mathbb{E}\left[D\left( P_{X;\theta_0}\middle\| P_{X;\hat{\theta}}\right)\right]\notag \\
    &= \frac{1}{2} \frac{N_0 + w_1^2n_1}{(N_0+w_1n_1)^2} + \frac{1}{2}\frac{(w_1n_1)^2J(\theta_0)(\theta_0 - \theta_1)^2}{(N_0+w_1n_1)^2} +o(\frac{1}{N_0}).
\end{align}

By differentiating with respect to $ w_1 $, we obtain the optimal value of weight is
\begin{align}
    w_1^*=\frac{1}{1+tn_1}
\end{align}
Similarly, by taking the derivative of \eqref{single_source_final} with respect to $ n_1 $, we observe that the derivative is 
\begin{align}
    -\frac{1}{2}\left(\frac{1+n_1t}{N_0+n_1+N_0n_1t}\right),
\end{align}
which is strictly negative, implying that the optimal value of $ n_1 $ is $ N_1 $. Finally, we obtain the optimal value of weight in \eqref{optimal_weight_singlesource}. 
\end{proof}

\section{Proof of Proposition \ref{prop:highdimension}\label{appendix:Proposition:one_source_highdimension}}
\begin{proof}
    Similar to \eqref{KLdecomposition}, we can get
\begin{align}
\label{appendix:kl_to_twokl_prop2}
    & \mathbb{E} \left[ D\left( P_{X;{\underline{\theta}}_0}\middle\| P_{X;\hat{{\underline{\theta}}}}\right) \right] 
    \nonumber\\
    &=\frac{1}{2}tr\left({J}({\underline{\theta}}_0)\mathbb{E}\left[ \left(\hat{{\underline{\theta}}} - E_{\hat{{\underline{\theta}}}}\right)\left(\hat{{\underline{\theta}}} - E_{\hat{{\underline{\theta}}}}\right)^{\top}\right]\right)
    \nonumber\\&
    +\frac{1}{2}tr\left({J}({\underline{\theta}}_0)\mathbb{E}\left[ \left(E_{\hat{{\underline{\theta}}}} - {\underline{\theta}}_0\right)\left(E_{\hat{{\underline{\theta}}}} - {\underline{\theta}}_0\right)^{\top}\right]\right)+o(\frac{1}{N_0}).
\end{align}
Similar to \eqref{CLT2},we can get
\begin{align}
\label{square_distribution_one_source}
\left(\hat{{\underline{\theta}}} - E_{\hat{{\underline{\theta}}}}\right)\xrightarrow{d}\mathcal{N}\left(0,\frac{N_0 + w_1^2n_1}{(N_0 + w_1n_1)^2} J({\underline{\theta}}_0)^{-1}\right).
\end{align}

So we can know that $\mathbb{E}\left[ \left(\hat{{\underline{\theta}}} - E_{\hat{{\underline{\theta}}}}\right)\left(\hat{{\underline{\theta}}} - E_{\hat{{\underline{\theta}}}}\right)^{\top}\right]$ has the limit
\begin{align}
\label{one_source_1_bound}
\frac{N_0 + w_1^2n_1}{(N_0 + w_1n_1)^2} J({\underline{\theta}}_0)^{-1}
\end{align}
The same to Lemma \ref{lemma_E_theta}, we can get
\begin{align}
\label{eq:theta3totheta12_prop}
    E_{\hat{{\underline{\theta}}}} = \frac{N_{0}{\underline{\theta}}_0 + w_1n_{1}{\underline{\theta}}_1}{N_{0}+w_1n_1} + O\left(\frac{1}{N_{0}}\right),
\end{align}
Combining \eqref{appendix:kl_to_twokl_prop2}, \eqref{square_distribution_one_source} and \eqref{eq:theta3totheta12_prop}, we can know that the K-L~measure has the form of \eqref{Proposition:one_source_eq_rewrite}.
\end{proof}

\section{Proof of Theorem \ref{thm:Weighted_multi_source}}
\label{appendix:Weighted_multi_source}
\begin{proof}
Similar to \eqref{CLT2},we can get
\begin{align}
\label{square_distribution_multi_source}
\left(\hat{{\underline{\theta}}} - E_{\hat{{\underline{\theta}}}}\right)\xrightarrow{d}
\mathcal{N}\left(0,\frac{N_0+\sum\limits_{i=1}^{K}w_i^2n_i}{\left(N_0+\sum\limits_{i=1}^{K}w_in_i\right)^2}J({\underline{\theta}}_0)^{-1}\right).
\end{align}
Similar to Lemma \ref{lemma_E_theta},we can get
\begin{align}
\label{multi_1_expectation}
E_{\hat{{\underline{\theta}}}}=\frac{N_0{\underline{\theta}}_0+\sum\limits_{i=1}^{k}w_in_i{\underline{\theta}}_i}{N_0+s}+O\left(\frac{1}{N_{0}}\right).
\end{align}


Combining \eqref{appendix:kl_to_twokl_prop2}, \eqref{square_distribution_multi_source} and \eqref{multi_1_expectation}, the K-L~measure is
\begin{align}
\label{appendix:eq:multisourcetarget1}
\frac{d}{2}\left(\frac{N_0+\sum\limits_{i=1}^{K}\frac{b_i^2}{n_i}}{(N_0+s)^2}+\frac{s^2}{(N_0+s)^2}\frac{\underline{\alpha}^T\Theta^T{J}(\underline{\theta}_0)\Theta\underline{\alpha}}{d}\right)+ o\left(\frac{1}{N_{0}}\right).\nonumber\\
\end{align}
According to (5),  it is easy to observe that the optimal performance is achieved when every
$n_i$ takes its maximum value $N_i$.  
Then, we can transform \eqref{appendix:eq:multisourcetarget1} to 
\begin{align}
&\frac{d}{2}\bigg(\frac{N_0}{(N_0+s)^2}+\frac{s^2}{(N_0+s)^2}\underline{\alpha}^T\operatorname{diag}\left(\frac{1}{N_1}, \dots, \frac{1}{N_K}\right)\underline{\alpha}\notag\\&+\frac{s^2}{(N_0+s)^2}\frac{\underline{\alpha}^T\Theta^T{J}(\underline{\theta}_0)\Theta\underline{\alpha}}{d}\bigg)+ o\left(\frac{1}{N_{0}}\right).\nonumber\\
\end{align}

\end{proof}

\end{document}